\definecolor{light-gray}{gray}{0.95}
\def\hlinewd#1{%
	\noalign{\ifnum0=`}\fi\hrule \@height #1 \futurelet
	\reserved@a\@xhline}
\newtheorem{theorem}{Theorem}
\newtheorem{corollary}[theorem]{Corollary}
\newtheorem{result}[theorem]{Result}
\newtheorem*{rep@theorem}{\rep@title}
\newcommand{\newreptheorem}[2]{%
	\newenvironment{rep#1}[1]{%
		\def\rep@title{#2 \ref{##1}}%
		\begin{rep@theorem}}%
		{\end{rep@theorem}}}
\newcommand{\R}{\mathbb{R}}
\newcommand{\norm}[1]{\|#1\|}
\DeclareMathOperator{\rank}{rank}
\title{Node Embeddings and Exact Low-Rank Representations of Complex Networks}
\author{
	Sudhanshu Chanpuriya \\ UMass Amherst \\ \texttt{schanpuriya@umass.edu }
	\and 
	Cameron Musco\\ UMass Amherst\\ \texttt{cmusco@cs.umass.edu}
	\and	
	Charalampos E. Tsourakakis\\ Boston University \& ISI Foundation \\ \texttt{tsourolampis@gmail.com} 
	\and
	Konstantinos Sotiropoulos\\ Boston University\\ \texttt{ksotirop@bu.edu} 
}
\begin{document}

\maketitle

\begin{abstract}

Low-dimensional embeddings, from classical spectral embeddings to modern neural-net-inspired methods, are a cornerstone in the modeling and analysis of complex networks. Recent work by Seshadhri et al. (PNAS 2020) suggests that such embeddings cannot capture local structure arising in complex networks. In particular, they show that any network generated from a natural low-dimensional model cannot be both sparse and have high triangle density (high clustering coefficient), two hallmark properties of many real-world networks.

In this work we show that the results of Seshadhri et al. are intimately connected to the model they use rather than the low-dimensional structure of complex networks. Specifically, we prove that a minor relaxation of their model can generate sparse graphs with high triangle density. Surprisingly, we show that this same model leads to \emph{exact low-dimensional factorizations} of many real-world networks. 
We give a simple algorithm based on logistic principal component analysis (LPCA) that succeeds in finding such exact embeddings.  Finally, we perform a large number of experiments that verify the ability of very low-dimensional embeddings to capture local structure in real-world networks. 
\end{abstract}

\section{Introduction}
\label{sec:intro}

Graphs naturally model a wide variety of complex systems including the internet, social networks, transportation networks, protein-protein interaction networks, the human brain, and co-authorship networks. Understanding and analyzing such networks lies at the heart of computer science. In recent years there has been a surge of interest in developing node embedding techniques that map the nodes of a graph to low-dimensional Euclidean space in such way that the geometry of the embedding reflects important structure in the graph.  Specifically, a node embedding method takes as input a graph $G$ with $n$ nodes $v_1,\ldots,v_n$ and maps each node $v_i$ to a vector $ x_i \in \R^k$, where $k$ is an embedding dimension typically with $k \ll n$. The learned embeddings can be used as input for downstream machine learning tasks  such as clustering, classification, and link prediction \cite{hamilton2017inductive}.

Geometric representations of graphs and low-rank factorizations  have a long history, cf. the text of Lov{\'a}sz and Vesztergombi \cite{lovasz1999geometric}, and 
important successes including 
spectral clustering  \cite{shi2000normalized,ng2002spectral},
Laplacian eigenmaps \cite{belkin2003laplacian}, IsoMap \cite{tenenbaum2000global}, locally linear embeddings \cite{roweis2000nonlinear},  and community detection algorithms  \cite{abbe2015exact,mcsherry2001spectral,rohe2011spectral,chin2015stochastic}. 
The stunning successes of deep learning in recent years
have also led to a new generation of neural network-based node embedding methods. Such methods include DeepWalk \cite{perozzi2014deepwalk}, node2vec \cite{grover2016node2vec}, LINE \cite{tang2015line}, NetMF \cite{qiu2018network}, and  many others \cite{tang2015pte,cao2016deep,kipf2016semi,wang2016structural}.  
  
This recent explosion of  novel node embedding methods has already proved valuable for numerous graph mining tasks. But what are their limitations? 

This question was recently posed by Seshadhri, Sharma, Stolman, and Goel in their PNAS paper \cite{seshadhri2020impossibility}. Seshadhri et al. remark that (i) regardless of the node embedding method, the goal is to produce a low-dimensional embedding that captures as much structure in $G$ as possible, and that (ii) it is well-known that real-world networks are sparse in edges, and rich in triangles. They ask the following intriguing  question: can low-dimensional node embeddings represent triangle-rich complex networks?  Their key conclusion is that graphs generated from low-dimensional embeddings cannot contain many triangles on low-degree vertices, and thus the answer to the aforementioned question is negative. See Theorem \ref{thm:pnas} in Section \ref{sec:related} for a formal statement of this result.

In this work, we prove  that the results in \cite{seshadhri2020impossibility} are a consequence of the model they use, rather than a general property of low-dimensional embeddings. We state our contributions as informal results; for the formal statements, see Section~\ref{sec:proposed}. Our first main result is:  

\begin{mdframed}[backgroundcolor=light-gray] 
\vspace{-.5em}
\begin{result}\label{r1} 
\normalfont 
Low-dimensional node embeddings are  able to represent triangle-rich graphs. 
\end{result}
\end{mdframed}

\begin{figure}
\centering
\includegraphics[width=0.57\linewidth]{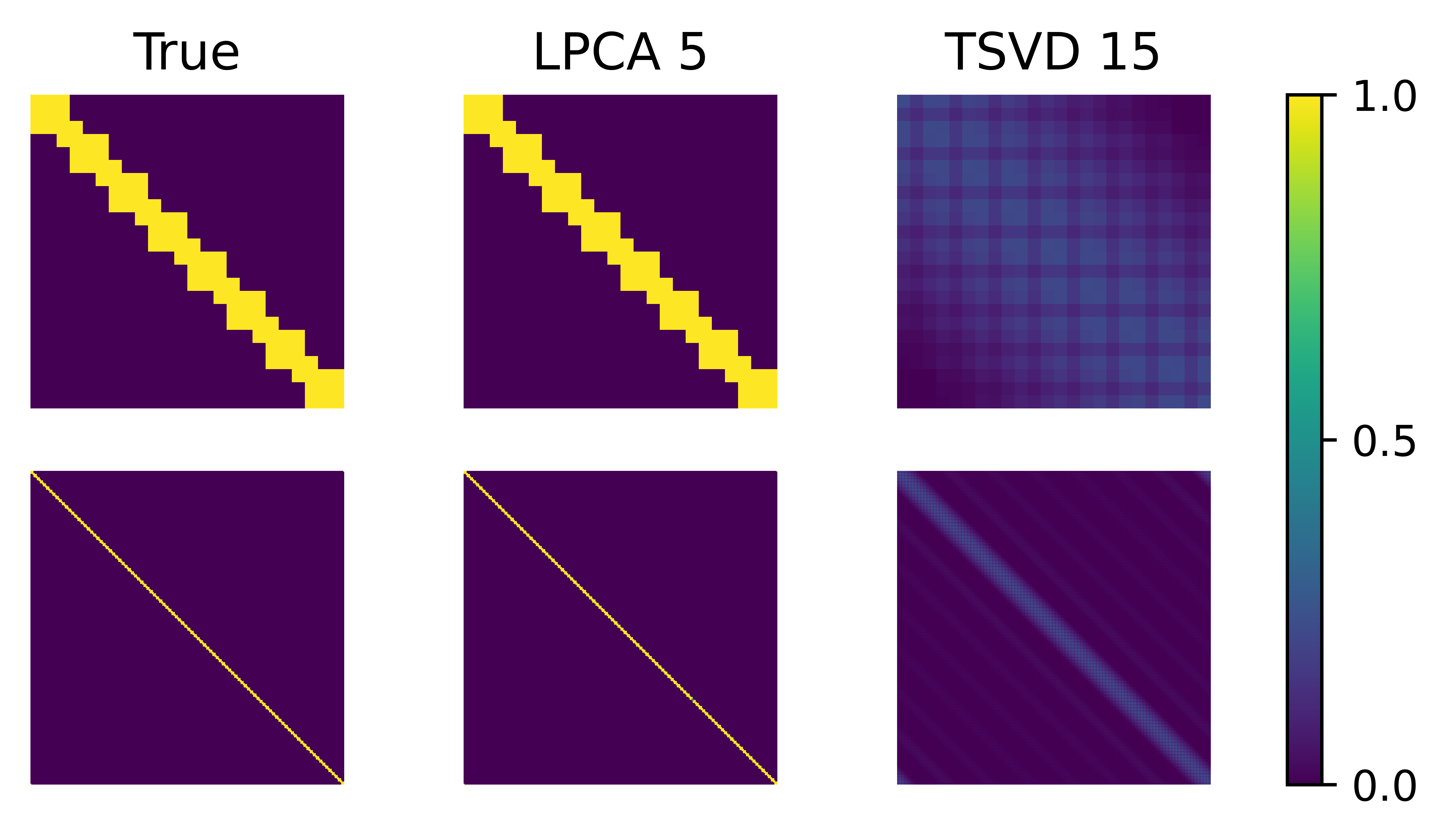}
\caption{\label{fig:toygraph} Reconstructions of a toy graph with 100 triangles connected in a loop and a self-loops on each vertex. Top: Zoomed in to the first 24 vertices, i.e., the first 8 triangles; Bottom: Whole graph. Left: True adjacency matrix; Middle: rank-$5$ approximation produced by our logistic PCA variant (LPCA); Right: rank-$15$ approximation with truncated SVD (TSVD) method \cite{seshadhri2020impossibility}.\normalsize }
\end{figure}

Figure~\ref{fig:toygraph} give an illustrative example of Result~\ref{r1}. Consider the family of graphs consisting of a set of triangles connected in a cycle. This family is a hard instance according to result of \cite{seshadhri2020impossibility} since it has near maximum triangle density given its low maximum degree.  Indeed, we can observe that an optimal 15-dimensional representation 
using the proposed method  of \cite{seshadhri2020impossibility} preserves very little structure in the graph.  However, as Figure~\ref{fig:toygraph} shows, there exists a rank-$5$ representation which nearly fully captures the graph structure. We discuss the details in Section~\ref{sec:exp}. 

\normalsize
Our second key result
is a low-dimensional model that can perfectly capture \emph{all bounded degree graphs}, regardless of structure.  An important corollary of this result is that preferential attachment graphs \cite{barabasi1999emergence} admit a  $\Theta(\sqrt{n})$-rank factorization  with high probability without losing any information about their structure. Furthermore, our result is constructive. 

\begin{mdframed}[backgroundcolor=light-gray] 
\vspace{-.5em}
\begin{result}\label{r2} 
\normalfont There exists a low-rank factorization algorithm that  provably produces {exact} low-dimensional embeddings for bounded degree graphs. 
\end{result}
\end{mdframed}  

We believe such \emph{exact embeddings} are of independent interest to researchers working in the interplay between privacy and node embeddings, e.g., \cite{ellers2019privacy,zhou2020privacy} and on graph autoencoders \cite{tian2014learning,wang2016structural,pan2018adversarially,salha2019keep}. 

We complement our results with several experiments on real-world networks. We observe that a simple algorithm can  produce {\em very low-dimensional} exact representations, that go below the theoretical bounds we prove in Result~\ref{r2}, 
and still preserve all local structure. See Table~\ref{tab:introTable} for a preview of our results on some popular datasets. 
We show that even lower dimensional factorizations, while not exact, suffice to capture important structure such as degree and triangle density.

\begin{mdframed}[backgroundcolor=light-gray] 
\vspace{-.5em}
\begin{result}\label{r3} 
\normalfont 
Empirically we observe that our proposed algorithm  produces {\em very low-dimensional} embeddings that preserve the local structure of large real-world networks.  
\end{result}
\end{mdframed}  

\begin{table}[h]
\small
  \centering
  \begin{tabular}{cccc}
    \toprule
    Dataset     & $\#$ Nodes    & Mean Degree &  \textbf{Exact Factorization Dimension} \\
	\midrule
	\texttt{Pubmed}			& 19\,581	& 4.48 & 48	\\
	\texttt{ca-HepPh}		& 11\,204	& 21.0 & 32 \\
	\texttt{BlogCatalog}	& 10\,312	& 64.8 & 128 \\
	\texttt{Citeseer}		& 3\,327	& 2.74 & 16  \\
	\texttt{Cora}			& 2\,708	& 3.90 & 16  \\
    \bottomrule \\
  \end{tabular}
  \normalsize
  \caption{\label{tab:introTable} Preview of our results; real-world graphs admit {\em exact} low-rank factorizations. For details and results on more datasets, see Section~\ref{sec:exp}.
  }
  \normalsize
\end{table}


\section{Related Work}
\label{sec:related}

\noindent \textbf{Node embeddings.} Low-dimensional node embeddings have a long and rich history. They have played a major role in theoretical computer science, and specifically in the development of approximation algorithms for NP-hard problems 
Spectral clustering relies on node embeddings based on a small number of extremal eigenvectors of a matrix representation of the graph (e.g., top eigenvectors of the adjacency matrix \cite{mcsherry2001spectral}, or bottom eigenvectors of the Laplacian \cite{alon1985lambda1}) to find good cuts, e.g., \cite{ng2002spectral,shi2000normalized,rohe2011spectral}. Metric embeddings have been used in multi-commodity flow algorithms \cite{leighton1999multicommodity,linial1995geometry}. These methods 
 embed a graph in a Euclidean space so that the distances between nodes in the graph are close to the geometric distances between the embeddings. In particular, by Bourgain's theorem, every metric space with $n$ elements (including every $n$ node graph metric) can be embedded in an $O(\log n)$ dimensional space with $O(\log n)$ distortion \cite{bourgain1985lipschitz}.   
Goemans and Williamson inaugurated through their seminal work on MAX-CUT \cite{goemans1995improved}  a large family of semidefinite programming relaxations for NP-hard problems that embed nodes in a high-dimensional  space, and round these embeddings 
 to obtain a near-optimal solution.

Many of the most classic random network models, including the stochastic block model \cite{holland1983stochastic,airoldi2008mixed} and random dot-product models \cite{young2007random} are based on low-dimensional node embeddings. Each pair of nodes $v_i,v_j$ is connected with probability depending on the similarity between their embeddings 
(e.g., the dot product $ x_i^T x_j$.). Many machine learning methods learn a latent low-dimensional embedding by maximizing a likelihood function that depends on a probabilistic model of this form 
\cite{kemp2006learning,miller2009nonparametric,palla2012infinite,gopalan2013efficient}.
Relatedly, work in non-linear dimensionality reduction learns node embeddings that capture general dataset structure. Classical methods such as Laplacian eigenmaps, IsoMap, and locally linear embeddings \cite{belkin2003laplacian,tenenbaum2000global,roweis2000nonlinear} associate a graph $G$ with a generic high-dimensional dataset (e.g., by forming a $k$-nearest neighbors graph) and apply variants of spectral embedding on $G$ to find an informative embedding for the original data points. 

In recent years there has been a surge of interest in node embedding methods inspired by the successes of deep learning \cite{lecun2015deep}. Neural network based methods including DeepWalk, Node2Vec, LINE, PTE, SDNE, and many more \cite{perozzi2014deepwalk, tang2015line,tang2015pte,grover2016node2vec,cao2015grarep,wang2016structural,cao2016deep,pan2018adversarially,wang2018graphgan} have become the node embeddings of choice in practice. Better  understanding these methods is an active area of research.  Some of them can be viewed as implicitly factoring a matrix corresponding to the graph $G$, connecting them to classic work on spectral embedding \cite{levy2014neural,qiu2018network,infiniteWalk}. Recently, Seshadhri et al. \cite{seshadhri2020impossibility} asked a crucial question: are there any inherent limitations on the ability of low-dimensional embeddings to capture relevant structure in complex networks? 
 We discuss their paper next, as it is the key motivation behind our work. 
 
\paragraph{Impossibility of low-rank representations of triangle-rich networks \cite{seshadhri2020impossibility}.} Seshadhri et al. argue that low-dimensional embeddings provably cannot capture important properties of real-world complex networks. In particular, it is well-known that real-world networks are sparse and contain many triangles, see e.g., \cite{faloutsos2009large,leskovec2007graph}. 
They argue that a graph generated from a natural low-dimensional embedding cannot have this property.  In particular they consider a truncated dot product model, where each node $v_i$ is associated with an embedding $x_i \in \R^k$ and nodes $v_i,v_j$ connect with probability proportional to the dot product $x_i^T x_j$, truncated to lie in $[0,1]$. Formally:
\begin{theorem}[Theorem 1, \cite{seshadhri2020impossibility}]\label{thm:pnas} Let $A = \sigma(XX^T)$ where $X \in \R^{n \times k}$ and $\sigma(x) = \max(0,\min(1,x))$ is a thresholding function which is applied entry-wise to $XX^T$. For any $c \ge 4$ and $\Delta \ge 0$, if a graph $G$ is generated by adding edge $(i,j)$ independently with probability $A_{i,j}$ and the expected number of triangles in $G$ that only involve vertices of expected degree $\le c$ is $ \ge \Delta n$, then the embedding dimension $k \ge \min(1,\Delta^4/c^9) \cdot n /\log^2  n$.
\end{theorem}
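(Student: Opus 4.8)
The plan is to prove the contrapositive: assuming the embedding dimension $k$ is small, I would upper bound the expected number of triangles supported on low-degree vertices and show it must fall below $\Delta n$. Write $G = XX^T$ for the Gram matrix, so that $\rank(G) \le k$, $A = \sigma(G)$, and the row sum $\sum_{j} A_{ij}$ is the expected degree of vertex $i$. Let $S = \{i : \sum_{j} A_{ij} \le c\}$ be the low-degree set, so that by linearity of expectation the quantity to control is
\[
T = \frac{1}{6} \sum_{i,j,l \in S} A_{ij} A_{jl} A_{il} \ge \Delta n .
\]
Two elementary facts come for free: $0 \le A_{ij} = \sigma(x_i^T x_j) \le 1$, and $\sum_{j} A_{ij} \le c$ for every $i \in S$. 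The entire goal is to convert ``bounded row sums together with many triangles'' into ``$\rank(G)$ is large.''

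The first thing to record is that $A = \sigma(G)$ is \emph{not} itself low rank --- the entrywise clipping $\sigma$ destroys the spectrum of $G$ --- so I cannot simply write $6T = \tr(A^3) = \sum_m \lambda_m^3$ and truncate to $k$ eigenvalues. I would instead split pairs into \emph{heavy} ($A_{ij} \ge 1/2$) and \emph{light} ($0 < A_{ij} < 1/2$). The heavy part is benign: each $i \in S$ has at most $2c$ heavy neighbors, so the heavy subgraph has maximum degree $O(c)$ and contributes $O(c^3)$ triangles per vertex. What makes heavy triangles plentiful is a \emph{directional} phenomenon, since a heavy edge forces the unit directions $x_i/\norm{x_i}$ and $x_j/\norm{x_j}$ of its endpoints to be well aligned; the triangle structure therefore concentrates inside clusters of mutually aligned directions.

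The crux is that the degree constraint couples all the entries of $G$ at once. The clips $\sigma$ are positive, so two direction-clusters separated by angle with cosine $\delta > 0$ contribute probability $\approx \delta$ to \emph{every} cross pair; a vertex thus sees $\approx \delta n$ of cross-cluster degree mass, and $\sum_j A_{ij}\le c$ forces $\delta \lesssim c/n$. Packing $\Omega(n/c)$ directions that are pairwise within $c/n$ of orthogonal makes the restricted Gram matrix within operator norm $o(1)$ of a scaled identity, hence of rank $\Omega(n/c)$ --- this is the mechanism behind a lower bound on $k$ that is \emph{linear in} $n$. I would make it quantitative by (i) bucketing vertices into $O(\log n)$ norm scales $\norm{x_i} \in [2^t, 2^{t+1})$, discarding negligibly small norms, and (ii) within each scale bounding the triangle mass by a second-moment / near-orthogonality estimate on the restricted Gram matrix, using $\sum_j A_{ij}\le c$ to cap its off-diagonal energy. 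Combining the scale buckets by Hölder and optimizing the heavy/light threshold is what I expect to produce the fourth powers and the $c^9$, while the two rounds of bucketing together with a union bound over pairs produce the $\log^2 n$; rearranging the resulting estimate $T \le O\!\left(c^{9/4} k^{1/4} n^{3/4} (\log n)^{1/2}\right)$ against $T \ge \Delta n$ then yields $k \ge \Omega\!\left(\Delta^4 n / (c^9 \log^2 n)\right)$, with the $\min(1,\cdot)$ absorbing the trivial regime.

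The main obstacle I anticipate is precisely the light edges. A low-degree vertex may have as many as $\Theta(n)$ light neighbors, each contributing a tiny edge probability, so the simple ``few strong neighbors'' bound is useless for them, yet many weak edges can still conspire to form triangles. Converting this diffuse weak-edge structure into a rank statement --- showing that the bounded row sums force the small off-diagonal entries of $G$ to be spread across nearly orthogonal directions, at a rate strong enough to give a linear bound --- is the technical heart of the argument, and is where the norm-bucketing and the second-moment control of the clipped Gram matrix do the real work.
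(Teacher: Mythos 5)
The paper you are reading does not prove this statement at all: Theorem~\ref{thm:pnas} is imported verbatim from Seshadhri et al.~\cite{seshadhri2020impossibility}, and the surrounding paper is devoted to showing its hypotheses are brittle (Theorems~\ref{thm:pnasUs} and~\ref{thm:sparse}). So your attempt can only be judged on its own merits, and it has a genuine gap. The central problem is that you classify the heavy edges ($A_{ij}\ge 1/2$) as ``benign'' and locate the technical heart in the light edges, but the heavy case \emph{is} the hard case. The extremal configuration is a union of $n/c$ $c$-cliques: every edge probability is $1$, the maximum degree is $c-1$, and there are $\Theta(c^2 n)$ triangles, all on low-degree vertices. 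Your degree count gives at most $O(c^2)$ heavy triangles per vertex (not $O(c^3)$), i.e.\ $O(nc^2)$ in total, which vastly exceeds $\Delta n$ --- so nothing is closed there; the entire content of the theorem is that such all-heavy configurations force $k = \Omega(n)$ when $A = \sigma(XX^T)$. Your proposed mechanism for this part also breaks at two points: (i) $x_i^Tx_j \ge 1/2$ forces the directions $x_i/\norm{x_i}$, $x_j/\norm{x_j}$ to be aligned only when both norms are $O(1)$, and a low expected degree does not bound $\norm{x_i}$; (ii) the degree constraint does \emph{not} force cross-cluster cosines into $[0, c/n]$, because $\sigma$ clips negative inner products to $0$, so two cluster directions can have cosine $-0.9$ while contributing nothing to any row sum. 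With possibly negative off-diagonal entries your ``near-identity restricted Gram matrix'' argument collapses (and even in the ideal case the operator-norm error is $m\delta \approx (n/c)\cdot(c/n) = 1$, not $o(1)$). The fact that rescues the PSD case is different in kind: in $\R^k$ only $O(k)$ unit vectors can have pairwise non-positive (or suitably small) inner products, and that is precisely where $X = Y$ must enter.

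This last point gives a sharp sanity check that your outline fails. This paper's Theorem~\ref{thm:pnasUs} exhibits $X, Y \in \R^{n\times 3}$ with $\sigma(XY^T)$ equal to the adjacency matrix of the union of $c$-cliques, so the statement you are proving is \emph{false} once the factorization is allowed to be non-symmetric. Every ingredient you actually establish --- nonnegativity of the clipped entries, row sums at most $c$, norm bucketing into $O(\log n)$ scales, H\"older across buckets --- holds verbatim for $\sigma(XY^T)$; therefore no combination of those ingredients can imply the conclusion, and the proof must live entirely in the steps you leave as black boxes (``second-moment / near-orthogonality estimate,'' ``the technical heart''). Relatedly, your final inequality $T \le O\bigl(c^{9/4} k^{1/4} n^{3/4} (\log n)^{1/2}\bigr)$ is reverse-engineered from the theorem's exponents rather than derived, so the quantitative claim $k \ge \min(1,\Delta^4/c^9)\cdot n/\log^2 n$ is not established by anything in the proposal.
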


If the triangle density $\Delta$ and the maximum degree $c$ are fixed, Theorem \ref{thm:pnas} implies that $X$ must have near-linear dimension $k = O(n/\log^2 n)$. That is, no low-dimensional embedding can capture the important feature of high triangle density  on low-degree nodes. This result contrasts with the well-known fact that low-rank approximations can be used to approximate global triangle counts \cite{tsourakakis2008fast}, showing that counts restricted to a subgraph of bounded degree nodes cannot be preserved. 

Seshadhri et al. conjecture that  Theorem \ref{thm:pnas} generalizes to models where $A_{i,j}$ is generated by natural functions of the embeddings $x_i, x_j$ other than the truncated dot product.   
In the next section we argue that Theorem \ref{thm:pnas} is in fact brittle and a consequence of the specific matrix factorization model used.


\section{Theoretical Results}
\label{sec:proposed}

We start by showing the impossibility result of Theorem \ref{thm:pnas} depends critically on the fact that each node is associated with just a single embedding $  x_i \in \R^k$. This ensures that the low-rank matrix $XX^T$ is positive semidefinite (PSD), which is key in proving Theorem \ref{thm:pnas}. Many network embeddings, such as DeepWalk and Node2Vec produce two embeddings $  x_i,   y_i \in \R^k$ for each node -- sometimes called ``word'' and ``context'' embeddings due to their use in the word embedding literature \cite{mikolov2013distributed}. This leads to a factorization of the form $XY^T$ for $X,Y \in \R^{n \times k}$ which is not necessarily  PSD. 
Further, as discussed in \cite{seshadhri2020impossibility}, other methods \cite{hoff2002latent} base connection probability on the Euclidean distance between $k$-dimensional points. The underlying squared Euclidean distance matrix $D \in \R^{n \times n}$ is known to be exactly factorized as $D = XY^T$ for $X,Y \in \R^{d \times k +2}$. 

We show that this simple relaxation to allow for a non-PSD factorization allows extremely low-dimensional embeddings to capture sparse, triangle dense graphs. 

\begin{theorem}[Low-Dimensional Embeddings Capture Triangles]\label{thm:pnasUs} Let $A = \sigma (XY^T)$ where  $X,Y \in \R^{n \times 3}$ and $\sigma(x) = \max(0,\min(1,x))$ is applied entrywise to $XY^T$. For any integer $c > 0$, there exist $X,Y$ such that if a graph $G$ is generated by adding edge $(i,j)$ independently with probability $A_{i,j}$ then $G$ has maximum degree $< c$ and $G$ contains $\Omega(c^2 n)$ triangles.
\end{theorem}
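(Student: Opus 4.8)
The plan is to make the generated graph $G$ \emph{deterministic} by choosing $X,Y$ so that every entry of $A=\sigma(XY^T)$ is exactly $0$ or $1$, with the resulting $0/1$ matrix equal to the adjacency matrix of a disjoint union of cliques. Concretely, I would target the graph on $n$ vertices consisting of $m=\lfloor n/c\rfloor$ vertex-disjoint cliques, each on $c$ vertices (with one smaller leftover clique if $c\nmid n$). This graph certifies both requirements at once: every vertex has degree exactly $c-1<c$, and the number of triangles is $m\binom{c}{3}=\Theta\!\left(\tfrac{(c-1)(c-2)}{6}\,n\right)=\Omega(c^2 n)$ for $c\ge 3$. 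So the whole theorem reduces to showing that this \emph{full-rank}, block-diagonal all-ones matrix can be written as $\sigma(XY^T)$ with $X,Y\in\R^{n\times 3}$.

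The key step --- and the one place where the nonlinearity $\sigma$ does all the work --- is to realize the clique-indicator as a \emph{clamped rank-$3$ bilinear form} coming from a squared-distance (moment-curve) embedding. Let $b(i)\in\{1,\dots,m\}$ denote the index of the clique containing vertex $i$. I would set the embeddings so that
\[
X_i^T Y_j \;=\; 1-\bigl(b(i)-b(j)\bigr)^2,
\]
which has rank $3$ because $1-(b(i)-b(j))^2 = 1-b(i)^2+2\,b(i)\,b(j)-b(j)^2$ expands into three bilinear terms; explicitly one may take $X_i=(\,1-b(i)^2,\;b(i),\;1\,)$ and $Y_j=(\,1,\;2\,b(j),\;-b(j)^2\,)$. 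The verification is then immediate: when $b(i)=b(j)$ the entry equals $1$, so $\sigma$ leaves it at $1$; when $b(i)\ne b(j)$ the integer gap gives $(b(i)-b(j))^2\ge 1$, so the entry is $\le 0$ and $\sigma$ clamps it to $0$. Hence $A=\sigma(XY^T)$ is exactly the block-diagonal all-ones matrix, $G$ equals the intended disjoint union of cliques with probability $1$, and the degree and triangle bounds follow.

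The remaining subtleties are minor bookkeeping rather than real obstacles. The diagonal gives $X_i^T Y_i=1$, i.e.\ self-loops, which are discarded under the usual simple-graph convention and affect neither degrees nor triangle counts; non-divisibility of $n$ by $c$ only changes one clique's size and does not affect asymptotics; and the bound $\Omega(c^2 n)$ is meaningful for $c\ge 3$ (no triangles exist for $c\le 2$, where the claim is vacuous). I do not expect a genuine analytic difficulty: once the target graph and the moment-curve embedding are fixed, the proof is essentially a one-line threshold calculation. The conceptual heart is instead the contrast with Theorem~\ref{thm:pnas}: the matrix $M=XY^T$ constructed above is symmetric yet \emph{indefinite} --- the $-(b(i)^2+b(j)^2)$ terms destroy positive semidefiniteness --- so it cannot be written as $ZZ^T$, which is exactly the PSD structure on which the impossibility argument relies.
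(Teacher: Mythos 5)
Your proposal is correct and follows essentially the same route as the paper: both realize a disjoint union of $n/c$ $c$-cliques as $\sigma$ applied to a rank-$3$ matrix of the form ``constant minus squared distances of points on a line,'' exploiting that such a matrix expands into three bilinear terms. The only difference is cosmetic --- you collapse each cluster to a single integer coordinate $b(i)$ and write the factors explicitly, whereas the paper uses nearly-coincident points within clusters separated by distance greater than $2$; the verification is identical.
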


We note that $G$ generated in Theorem \ref{thm:pnasUs} is just a union of $\frac{n}{c}$ $c$-cliques and in fact has the maximum triangle density possible for a graph with max degree $c$. $G$'s adjacency matrix $A$ is block diagonal with blocks of size $c$. This matrix is very far from low-rank and cannot be well approximated by  a low-rank factorization $XY^T$. However, as we will see, the simple $\sigma(\cdot)$ non-linearity is quite powerful here, allowing an exact factorization of the form $\sigma(XY^T)$ for $X,Y \in \R^{n \times 3}$. 
\begin{proof}[Proof of Theorem \ref{thm:pnasUs}]
We show how to form $X,Y \in \R^{n \times 3}$ such that $A = \sigma(XY^T)$ is the adjacency matrix for a union of $n/c$ cliques. Each clique contains ${c \choose 3}$ triangles. Thus, there are $\frac{n}{c} \cdot {c \choose 3} = \Omega(c^2n )$ triangles in the graph, with maximum degree $c-1$, giving the theorem. 

We place $n$ points along a line in $n/c$ clusters of $c$ nodes each. 
All points in a cluster are very close to each other, and clusters are spaced far apart. We then consider a constant matrix minus the squared distance matrix between these points. We can observe that this matrix has rank at most $3$: consider $ x \in \R^{n \times 1}$ which represents the $n$ positions on the line. Let $ x_2$ contain the entry-wise squares of the values in $ x$. Let $D =   x_2   1^T +   1   x_2^T - 2   x   x^T$ be the matrix whose entries are  the squared  Euclidean distances between the points in $  x$. Let $\bar D = 2J - D$, where $J$ is the all ones matrix. Note that $\bar D$ has rank $\le \rank(2J-  x_2   1^T) + \rank(-1   x_2^T) + \rank(-2   x   x^T) = 3$ and so can be written as $XY^T$ for $X,Y \in \R^{n \times 3}$. We set $A =  \sigma(XY^T) = \sigma(\bar D)$.

Choose the points in $u$ such that the clusters are separated by distance $ > 2$ and within each cluster the $c$ points are arbitrarily close. For $i,j$ in the same cluster, $A_{i,j} = \bar D_{i,j} = 2 - \norm{u_i - u_j}_2^2 > 1$ and so we have an edge in $G$ with probability $1$. For $i,j$ in different clusters, $A_{i,j} = \bar D_{i,j} =  2 - \norm{u_i - u_j}_2^2 \le 0$, and so they do not have an edge in $G$.
Thus, $G$ consists of a union of $n/c$ disjoint $c$-cliques.  
\end{proof}

The embedding of Theorem \ref{thm:pnasUs} might seem unnatural. E.g., different nodes have embeddings of very different lengths depending on their position along the line. However, it is not hard to reconstruct a similar idea using simple binary embeddings. See Appendix \ref{app:missing}. Additionally, as we saw in Section~\ref{sec:intro} (see also  Section \ref{sec:exp}) an accurate rank-5 factorization for a closely related triangle dense graph (with $n/3$ triangles connected in a cycle), can be found with a simple logistic PCA method.

\paragraph{Exact Embeddings of Bounded-Degree Graphs.} Observe that in the proof of Theorem \ref{thm:pnasUs} we use the thresholded dot product model of \cite{seshadhri2020impossibility} in a very restricted way: all entries of $XY^T$ are either $> 1$ or $< 0$ and thus all large entries are thresholded to $1$ in $\sigma(XY^T)$ and all small entries to $0$. Thus, the same example would hold if we replaced $\sigma$ with the sign function $s$ with $s(x) = 0$ for $x < 0$ and $s(x) = 1$ otherwise. In other words, our example relies on the fact that the adjacency matrix of $G$ has low \emph{sign-rank}. It can be written as $A = s(XY^T)$ for $X,Y \in \R^{n \times 3}$. The sign-rank is widely studied due to its connections to circuit complexity \cite{razborov2010sign,bun2016improved}, communication complexity \cite{alon1985geometrical,linial2007complexity}, and learning theory \cite{alon2016sign}. It is known via a polynomial interpolation argument \cite{alon1985geometrical} that any matrix with sparse rows or columns has low sign-rank, depending linearly on the sparsity. This yields:

\begin{theorem}[Exact Embeddings for Bounded-Degree Graphs]\label{thm:sparse} Let $A \in \{0,1\}^{n \times n}$ be the adjacency matrix of a graph $G$ with maximum degree $c$. Then there exist embeddings $X,Y \in \R^{n \times (2c+1)}$ such that $A = \sigma (XY^T)$ where $\sigma(x) = \max(0,\min(1,x))$ is applied entry-wise to $XY^T$. 
\end{theorem}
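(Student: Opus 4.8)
The plan is to reduce the claim to a bound on the \emph{sign-rank} of $A$ and then establish that bound by a polynomial interpolation argument, finally upgrading the sign function to $\sigma$ by a rescaling. As the discussion preceding the theorem notes, the construction in Theorem~\ref{thm:pnasUs} used $\sigma$ only through its agreement with the sign function on inputs bounded away from the interval $(0,1)$. So I would first aim to produce $X,Y \in \R^{n\times(2c+1)}$ realizing $A$ under the sign function $s$ introduced above, \emph{together} with a quantitative gap between the positive and negative entries of $XY^T$ that lets me pass from $s$ to $\sigma$.

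First I would place the $n$ vertices at distinct abscissae $t_1 < t_2 < \cdots < t_n$ on the real line (e.g.\ $t_j = j$), fixed once and for all. For each vertex $i$, consider the target sign sequence $b_j \eqdef 2A_{ij}-1 \in \{-1,+1\}$. Since $G$ has maximum degree $c$, row $i$ contains at most $c$ ones, so the sequence $b_1,\dots,b_n$ has at most $c$ maximal runs of $+1$ and hence at most $2c$ sign changes between consecutive entries. I would then build a univariate polynomial $p_i$ of degree at most $2c$ realizing this pattern: place one simple root strictly inside each gap $(t_{j},t_{j+1})$ across which $b_j$ flips, and fix the leading coefficient's sign to match $b_n$. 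By construction $p_i$ changes sign exactly at the prescribed gaps and vanishes at no sample point, so $p_i(t_j) > 0 \iff A_{ij}=1$ with $p_i(t_j)\neq 0$ for all $j$.

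Next I would read the factorization off these polynomials. Writing $p_i(t)=\sum_{k=0}^{2c} (x_i)_k\, t^k$ (padding with zero coefficients up to degree $2c$) and setting $y_j = (1, t_j, t_j^2, \dots, t_j^{2c})$, we obtain $(XY^T)_{ij} = x_i^T y_j = p_i(t_j)$ with $X,Y \in \R^{n\times(2c+1)}$; this already gives $s(XY^T)=A$. To convert to the $\sigma$ non-linearity, let $\delta \eqdef \min_{i,j}|p_i(t_j)| > 0$, which is strictly positive precisely because no $p_i$ vanishes at a sample point, and replace $X$ by $X/\delta$. Every entry of $XY^T$ then has absolute value $\ge 1$: entries with $A_{ij}=1$ are $\ge 1$ and thus map to $1$ under $\sigma$, while entries with $A_{ij}=0$ are $\le -1<0$ and map to $0$. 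Hence $\sigma(XY^T)=A$ with $X,Y\in\R^{n\times(2c+1)}$, as required.

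The main obstacle is the degree bound: everything hinges on certifying that a polynomial of degree $2c$ suffices to interpolate the sign pattern of each row, and this is exactly where the bounded-degree hypothesis enters, through the $\le 2c$ count of sign changes. The remaining steps—packaging coefficient and Vandermonde vectors into $X$ and $Y$, and the uniform rescaling that promotes the sign function to $\sigma$—are routine once the non-vanishing of each $p_i$ at the sample points has been secured by placing its roots in the open gaps.
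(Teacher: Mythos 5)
Your proposal is correct and follows essentially the same route as the paper's proof: realize each row's sign pattern by a degree-$2c$ polynomial with roots placed in the gaps between sample points (the bounded degree giving at most $c$ runs of ones, hence at most $2c$ sign changes/roots), factor via coefficient vectors against a Vandermonde matrix to get $X,Y \in \R^{n \times (2c+1)}$, and rescale to promote the sign function to $\sigma$. The only cosmetic difference is that the paper places two roots flanking each one-entry while you place one root per sign change, but this is the same polynomial interpolation argument of Alon, Frankl, and R\"odl.
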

%
%
For completeness, we give a proof of Theorem \ref{thm:sparse} in Appendix \ref{app:missing}, following the approach of \cite{alon1985geometrical}. Theorem  \ref{thm:sparse} stands in sharp contrast to the impossibility result of \cite{seshadhri2020impossibility} (Theorem \ref{thm:pnas}). Not only can low-rank models capture complex network structure, but they can capture the structure of \emph{any bounded-degree graph} with rank depending only on the max degree. We remark that the technique used to prove Theorem \ref{thm:sparse} applies also when each row of $A$ is block sparse -- with a few contiguous blocks of ones. Considering the union of cliques example in Theorem \ref{thm:pnasUs}, if we set the diagonal of $A$ to one, we have a block diagonal matrix -- each row has a single contiguous  block of $c$ ones. This matrix thus has sign rank at most $2 \cdot 1 + 1 = 3$, giving an alternative proof of Theorem \ref{thm:pnasUs}.

An interesting corollary of Theorem \ref{thm:sparse} is that even \emph{random graphs} admit  exact low-dimensional factorizations if they have bounded degree. For example, preferential attachment graphs \cite{barabasi1999emergence}, which bear certain similarities with real-world networks, are sparse graphs with maximum degree bounded by $O(\sqrt{n})$ with high probability \cite{bollobas2001degree, flaxman2005high}. We thus have: 
%
 %
\begin{corollary}
\label{pa}
A random preferential attachment graph with $n$ nodes generated according to the Barabási-Albert-Bollobás-Riordan \cite{barabasi1999emergence,bollobas2001degree} model admits an exact $\Theta(\sqrt n )$ factorization.  
\end{corollary}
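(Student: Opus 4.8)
The plan is to derive this as an immediate consequence of Theorem~\ref{thm:sparse}, which bounds the factorization dimension of any graph by a linear function of its maximum degree. Since that theorem guarantees embeddings $X,Y \in \R^{n \times (2c+1)}$ for any graph of maximum degree $c$, it suffices to control the maximum degree of a preferential attachment graph and substitute it in. So the entire argument reduces to one degree bound plus a single invocation of the theorem.

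First I would recall the known concentration of the maximum degree in the Barab\'asi--Albert--Bollob\'as--Riordan model. The cited results of Bollob\'as--Riordan~\cite{bollobas2001degree} and Flaxman et al.~\cite{flaxman2005high} establish that, with high probability, the largest degree in an $n$-node preferential attachment graph is of order $\sqrt{n}$; in particular there is a constant $\gamma$ such that the maximum degree $\Delta_{\max}$ satisfies $\Delta_{\max} \le \gamma\sqrt{n}$ with probability $1 - o(1)$. I would invoke this as a black box rather than reprove it, since all of the probabilistic content already lives in those references.

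Next I would apply Theorem~\ref{thm:sparse} with $c = \Delta_{\max}$. On the high-probability event that $\Delta_{\max} \le \gamma\sqrt{n}$, the graph has maximum degree at most $\gamma\sqrt{n}$, so the theorem produces $X,Y \in \R^{n \times (2\Delta_{\max}+1)}$ with $A = \sigma(XY^T)$. Since $2\Delta_{\max}+1 = \Theta(\sqrt{n})$, this is an exact factorization of dimension $\Theta(\sqrt{n})$, as claimed.

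The only real subtlety, which I would flag explicitly, is the meaning of the $\Theta(\cdot)$ in the statement. The construction certifies an upper bound of $O(\sqrt{n})$ on the factorization dimension, and the matching order is inherited from the fact that the maximum degree is itself $\Theta(\sqrt{n})$ with high probability. I would be careful not to over-claim: this is not a lower bound on the minimum possible factorization dimension of the graph. Indeed, sparse graphs can have far smaller sign-rank (as the union-of-cliques example following Theorem~\ref{thm:sparse} shows), so no matching impossibility result should be asserted here. The hard part, such as it is, lies entirely in the degree concentration, which is already resolved in the cited literature; everything else is a one-line substitution.
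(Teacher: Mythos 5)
Your proposal is correct and follows essentially the same route as the paper, which derives the corollary directly from Theorem~\ref{thm:sparse} combined with the cited $O(\sqrt{n})$ whp maximum-degree bound for the Barab\'asi--Albert--Bollob\'as--Riordan model \cite{bollobas2001degree,flaxman2005high}. Your explicit caveat about the meaning of $\Theta(\cdot)$ — that the construction certifies only an upper bound of order $\sqrt{n}$ on the factorization dimension, not a matching lower bound — is a careful reading that the paper leaves implicit, but it does not change the argument.
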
  
 
Corollary~\ref{pa} applies to numerous other random graph models with power law degree distributions as long as the maximum degree produced is sublinear, e.g., \cite{drinea2001variations,buckley2004popularity,frieze2012certain}. 

We can interpret Theorem \ref{thm:sparse} and Corollary \ref{pa} in multiple ways: they illustrate the power of low-dimensional models to exactly represent local structure in sparse graphs. At the same time, they show that  the goal of finding a low-dimensional embedding to reconstruct a graph may be misleading, since a sufficiently optimized embedding can interpolate and maybe `over-fit' any bounded degree graph. This emphasizes that obtaining low or even zero approximation error graph embedding may simply be due to capturing the fact that the given graph has low maximum degree.  

We will see that in practice, the bound of Theorem \ref{thm:sparse} is not tight. Via a simple logistic PCA method, we can construct very low-dimensional exact factorizations of many real-world graphs, even when they have high max degree. The precise description of our algorithm follows in Section~\ref{sec:exp}.

\section{Empirical Results}
\label{sec:exp}

We now empirically evaluate the effectiveness of low-dimensional embeddings in capturing graph structure, showing that a simple approach can find exact embeddings that match and in fact out perform our theoretical bounds. Code is available at \url{https://github.com/schariya/exact-embeddings.}

\paragraph{Datasets.}

Our evaluations are based on $11$ popular real-network datasets, detailed below. 
Table~\ref{tab:bigTab} lists and shows some statistics of these datasets.
 For all networks, we ignore weights (setting non-zero weights to $1$) and remove self-loops where applicable.
 
 \begin{itemize}
\item[] \textbf{\textsc{Protein-Protein Interaction (PPI)}}~\cite{stark2010biogrid} is a subgraph of the \textsc{PPI} network for Homo Sapiens. Vertices represent proteins and edges represent interactions between them.

\item[]\textbf{\textsc{Wikipedia}}~\cite{grover2016node2vec} is a co-occurrence network of words from a subset of the Wikipedia dump. Nodes represent words and edges represent co-occurrences within windows of length $2$ in the corpus.

\item[]\textbf{\textsc{BlogCatalog}}~\cite{agarwal2009social} is a social network of bloggers. Edges represent friendships.

\item[]\textbf{\textsc{Facebook}}~\cite{leskovec2012learning} is a subset of the Facebook social network collected from survey participants.

\item[]\textbf{\textsc{ca-HepPh}}  and \textbf{\textsc{ca-GrQc}}~\cite{leskovec2007graph} are collaboration networks from the ``High Energy Physics - Phenomenology" and ``General Relativity and Quantum Cosmology" categories of arXiv, respectively. Nodes represent authors, and two authors are connected if they have coauthored a paper.


\item[]\textbf{\textsc{Pubmed}}~\cite{namata2012query} consists of scientific publications from the PubMed database pertaining to diabetes. Nodes are publications, and edges represent citations among them.

\item[]\textbf{\textsc{p2p-Gnutella04}}~\cite{leskovec2007graph} is a snapshot of the Gnutella peer-to-peer network from August 4, 2002. Nodes are hosts in Gnutella, and directed edges are connections between hosts.

\item[]\textbf{\textsc{Wiki-Vote}}~\cite{leskovec2010signed} represents voting on Wikipedia till January 2008. In particular, nodes are users that either request adminship or vote for/against such a promotion. A directed edge from node $i$ to node $j$ represents that user $i$ voted on user $j$.

\item[]\textbf{\textsc{Citeseer}}~\cite{sen2008collective} represents papers from six scientific categories as nodes and the citations among them as directed edges.

\item[]\textbf{\textsc{Cora}}~\cite{sen2008collective} contains machine learning papers. Each node is a publication, and there is a directed edge from node $i$ to node $j$ when paper $i$ cites paper $j$.
\end{itemize}

\paragraph{Reconstruction Algorithms.}   The empirical results of Seshadhri et al.~\cite{seshadhri2020impossibility} focus on the Truncated SVD ({\sc TSVD}) algorithm. Let $Z\in \R^{n \times k}$ be the orthonormal matrix whose columns comprise the eigenvectors of the adjacency matrix $A \in \{0,1\}^{n \times n}$ corresponding to the $k$ largest magnitude eigenvalues. Let $W\in \R^{k \times k}$ be diagonal, with entries corresponding to the top $k$ eigenvalues. The TSVD embeddings are given by  
$X = Z \text{s}(W) |W|^{1/2}$ and $Y = Z |W|^{1/2}$,
where $s(\cdot)$ denotes the sign function and all functions are applied entry-wise to $W$. To form an expected adjacency matrix, we compute $\sigma(XY^T)$ where $\sigma(x) = \max(0,\min(1,x))$ is applied element-wise.

Note that $XY^T$ produced by TSVD is the rank-$k$ matrix that is closest to $A$ in terms of Frobenius norm. I.e, it would be an optimal low-rank factorization \emph{if the threshold $\sigma(\cdot)$ were not applied}. As discussed in Section \ref{sec:proposed}, many natural adjacency matrices, especially triangle dense ones such as the example of Theorem \ref{thm:pnasUs}, are very far from low-rank and thus $XY^T$ does not well approximate $A$ either both before and after the threshold.

This motivates our proposed embedding method, which is based on Logistic PCA ({\sc LPCA}).  Rather than minimizing the error between $XY^T$ and $A$, we attempt to directly minimize  the error between $\sigma(XY^T)$ and $A$. For efficiency, we replace $\sigma$ with a natural smooth surrogate: the logistic function (the sigmoid). Specifically, given  $A \in \{0,1\}^{n \times n}$ and embeddings $X,Y \in \R^{n \times k}$, we let $\tilde{A} = 2A-1$ denote the shifted adjacency matrix with $-1$'s in place of $0$'s and use the loss function:
\begin{equation}
\label{lpca_loss}
L = \sum_{i=1}^n \sum_{j=1}^n -\log \ell \left( \tilde{A}_{i,j} [XY^T]_{i,j} \right),
\end{equation}
where $\ell(x) = (1+e^{-x})^{-1}$ is the logistic function.
We initialize elements of the factors $X,Y$ independently and uniformly at random on $[-1,+1]$. We find factors that approximately minimize the loss using the SciPy \cite{scipy} implementation of the L-BFGS \cite{liu1989limited,zhu1997algorithm} algorithm with default hyper-parameters and up to a maximum of 2000 iterations. We check for exact factorization by comparing $A$ to $\sigma(XY^T)$. If these are not equal, the factorization is inexact; in that case, to reconstruct an expected adjacency matrix, we apply the  logistic function $\ell$ entry-wise to $XY^T$.

\paragraph{Toy Graph.} We return to the initial demonstrative example from Section~\ref{sec:intro} where we considered the family of graphs consisting of a set of $t$ triangles connected in a cycle (Figure \ref{fig:toygraph}). This family is interesting as it has near maximum triangle density given its sparsity. It  starkly illustrates the difference in the capacities of LPCA and TSVD. With embeddings of rank 5, our LPCA method reconstructs a graph with 100 triangles with only minor errors. By contrast, elements of the reconstruction from TSVD at rank 5 are too small to visualize effectively on the same scale, with a maximum below $0.08$; even at rank 15, TSVD struggles to capture this graph, significantly diffusing the mass of the adjacency matrix away from the diagonal. In particular, the relative Frobenius errors of the reconstructions (i.e. the Frobenius norm of the difference between the true and reconstructed expected adjacency matrices, divided by the norm of the true adjacency matrix) are $0.031$ and $0.894$, respectively; for a direct comparison, with a rank 5 embedding, the error of TSVD is $0.966$.
 
\paragraph{Exact Factorization of Real Networks.}\label{sec:efdrealnets} In Table \ref{tab:bigTab} we report the exact factorization dimension (EFD) for 11 real-world networks, the rank at which LPCA exactly recovers the network within 2000 iterations (i.e., returns $X,Y$ with $\sigma(XY^T) = A$.). 
 We only compute factorizations at ranks which are multiples of 16, and thus the EFDs are calculated up to a multiple of 16. The values for EFDs are remarkably low -- for 3 of the tested networks we achieve exact factorization even at our minimum attempted rank of $16$; for these networks, we attempted rank 8 LPCA, but did not achieve exact factorization within 2000 iterations. Moreover, for the rank that we achieved perfect reconstruction (EFD), we report the relative Frobenius error of the TSVD approach; we observe the error is quite high in all cases. For all networks, the EFD is significantly lower than the upper bound presented in Theorem \ref{thm:sparse}, of twice the max degree plus one. With the exception of $\texttt{Pubmed}$, all network are factored exactly at or below ranks that are twice just the $95^\text{th}$ percentile degree; the max degree for $\texttt{Pubmed}$ is $171$, so it, too, is factored within the theoretical bound.


As a baseline, we generate for each network a set of random graphs with the same expected degree sequence using the algorithm of \cite{van2016random}. We report the EFD at which three random networks generated can be perfectly reconstructed; we note that for a lower rank than the one reported, it was not possible to reconstruct the networks in any of the runs. In general, results are well concentrated and show that EFD is consistently higher for the random networks. In other words, the embeddings capture structure inherent to real-world networks outside just the degree sequence. Understanding this structure more precisely is an interesting direction for future work. For completeness, we repeat the previous experiment generating Erd\H{o}s-R\'enyi random graphs with the same expected number of edges. We observe that reconstructing these networks is in fact easier, due to the absence of high degree nodes. This justifies the choice of random networks with the same expected degree sequence as the true networks as a more suitable baseline. 

\begin{table}[h]
\caption{Real world graphs for which we find exact adjacency matrix factorizations of the form $A = \sigma(XY^T)$ where $X,Y \in \R^{n \times k}$ and $\sigma(x) = \max(0,\min(1,x))$ is a thresholding function applied entrywise to $XY^T$. EFD is the exact factorization dimension for LPCA. We report the $95^{th}$ percentile degree as a more robust and informative alternative to the maximum degree. TSVD Error is the relative Frobenius error of TSVD at the EFD for LPCA. The final two columns give the EFDs of the random graphs related to these networks described above.}

\centering
	\begin{tabularx}{\linewidth}{c|ccc|cc|cc}
	\toprule
	
	Data Set & 
	$\#$ Nodes    
	& \begin{tabular}{@{}c@{}}Mean \\ Degree\end{tabular}
	& \begin{tabular}{@{}c@{}}$95^{\text{th}}\%$ \\ Degree \end{tabular} 
	& \textbf{EFD}
	& \begin{tabular}{@{}c@{}} TSVD \\ Error \end{tabular}
	& \begin{tabular}{@{}c@{}} \textbf{EFD} \\ \scriptsize{(Exp. Degree)} \end{tabular}
	& \begin{tabular}{@{}c@{}} \textbf{EFD} \\ \scriptsize{(Erdős–Rényi)} \end{tabular}\\	
	
	\midrule

    \texttt{Pubmed}			& 19\,581 & 4.48 & 18 & 48 & 0.95 & 48 & 32 \\
    \texttt{ca-HepPh}		& 11\,204 & 21.0 & 90 & 32 & 0.63 & 96 & 64 \\
    \texttt{p2p-Gnutella04}	& 10\,876 & 3.68 & 32 & 32 & 0.97 & 32 & 16 \\
    \texttt{BlogCatalog}	& 10\,312 & 64.8 & 239 & 128 & 0.71 & 160 & 128 \\
    \texttt{Wiki-Vote}		& 7\,115 & 14.6 & 75 & 48 & 0.77 & 80 & 48 \\
    \texttt{ca-GrQc}		& 5\,242 & 5.53 & 20 & 16 & 0.85 & 32 & 32 \\
    \texttt{Wikipedia}		& 4\,777 & 38.7 & 99 & 64 & 0.69 & 80 & 80 \\
    \texttt{Facebook}		& 4\,039 & 43.7 & 153 & 32 & 0.66 & 96 & 80 \\
    \texttt{PPI}			& 3\,890 & 19.7 & 72 & 48 & 0.81 & 64 & 48 \\
    \texttt{Citeseer}		& 3\,327 & 2.74 & 8 & 16 & 0.94 & 16 & 16 \\
    \texttt{Cora}			& 2\,708 & 3.90 & 9 & 16 & 0.93 & 16 & 16 \\

  \end{tabularx}\\
  \label{tab:bigTab}
\end{table}


\paragraph{Recovery of Degree and Triangle Count Sequences.} We next assess the accuracy of very low-dimensional embeddings with respect to reconstructing fundamental network information, namely the sequence of (i) degrees and (ii) participating triangles per node. See our detailed findings in Figures~ \ref{fig:degseq} and \ref{fig:triseq}.
We see that the LPCA based embeddings are able to capture both sequences near exactly, even with rank much smaller than the EFD. 
As we range the rank, LPCA's reconstruction quality for both sequences is a monotone function of the rank; TSVD's performance is not monotone as can be seen e.g., in the PPI plots  in Figures~\ref{fig:degseq} and \ref{fig:triseq} for ranks 32 and 128 respectively.
Generally, the TSVD method  performs significantly worse than LPCA.


\begin{figure}[H]
\centering
\includegraphics[width=0.345\linewidth]{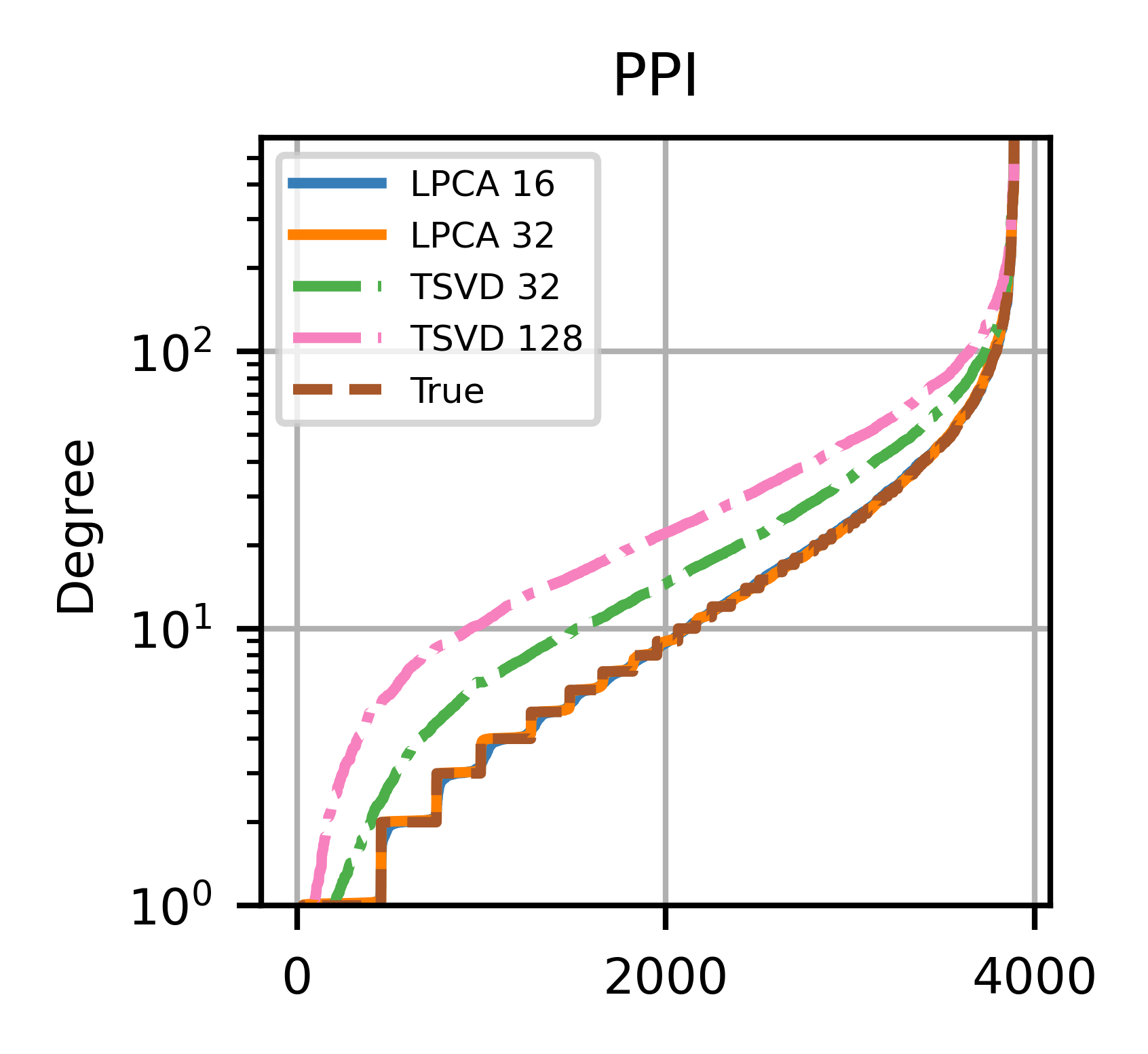} \hfill
\includegraphics[width=0.31\linewidth]{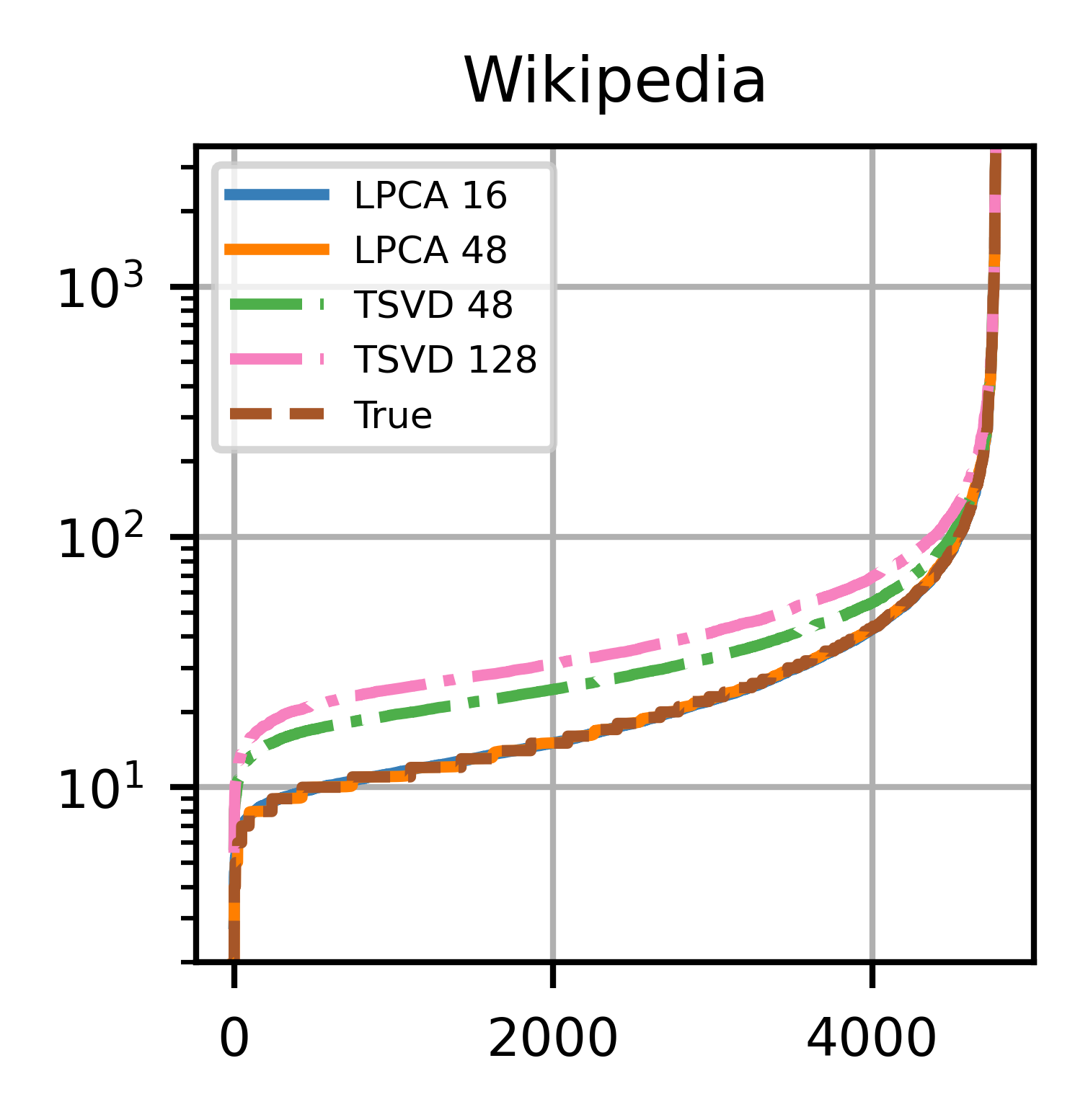} \hfill
\includegraphics[width=0.32\linewidth]{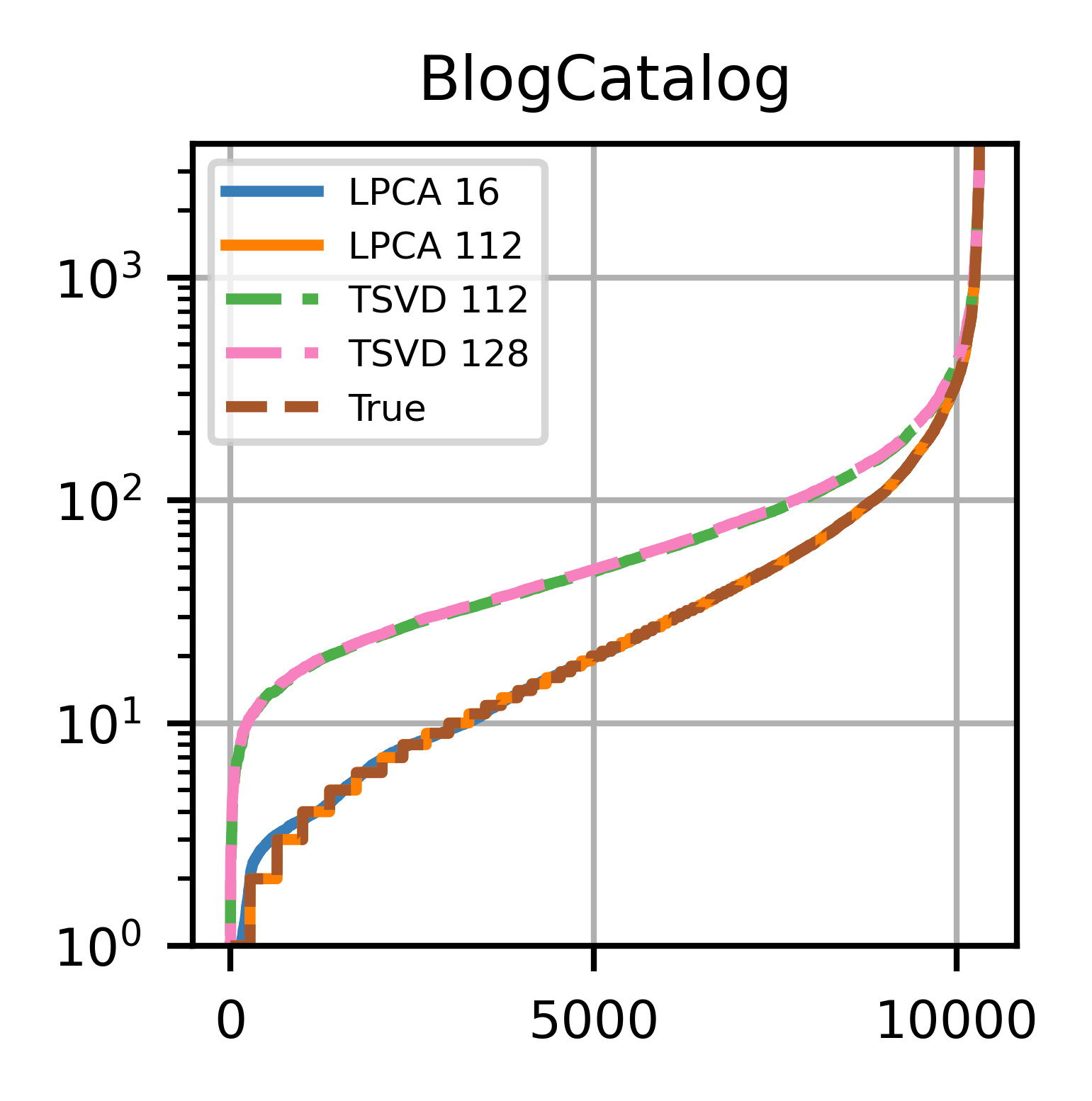}\\
\includegraphics[width=0.345\linewidth]{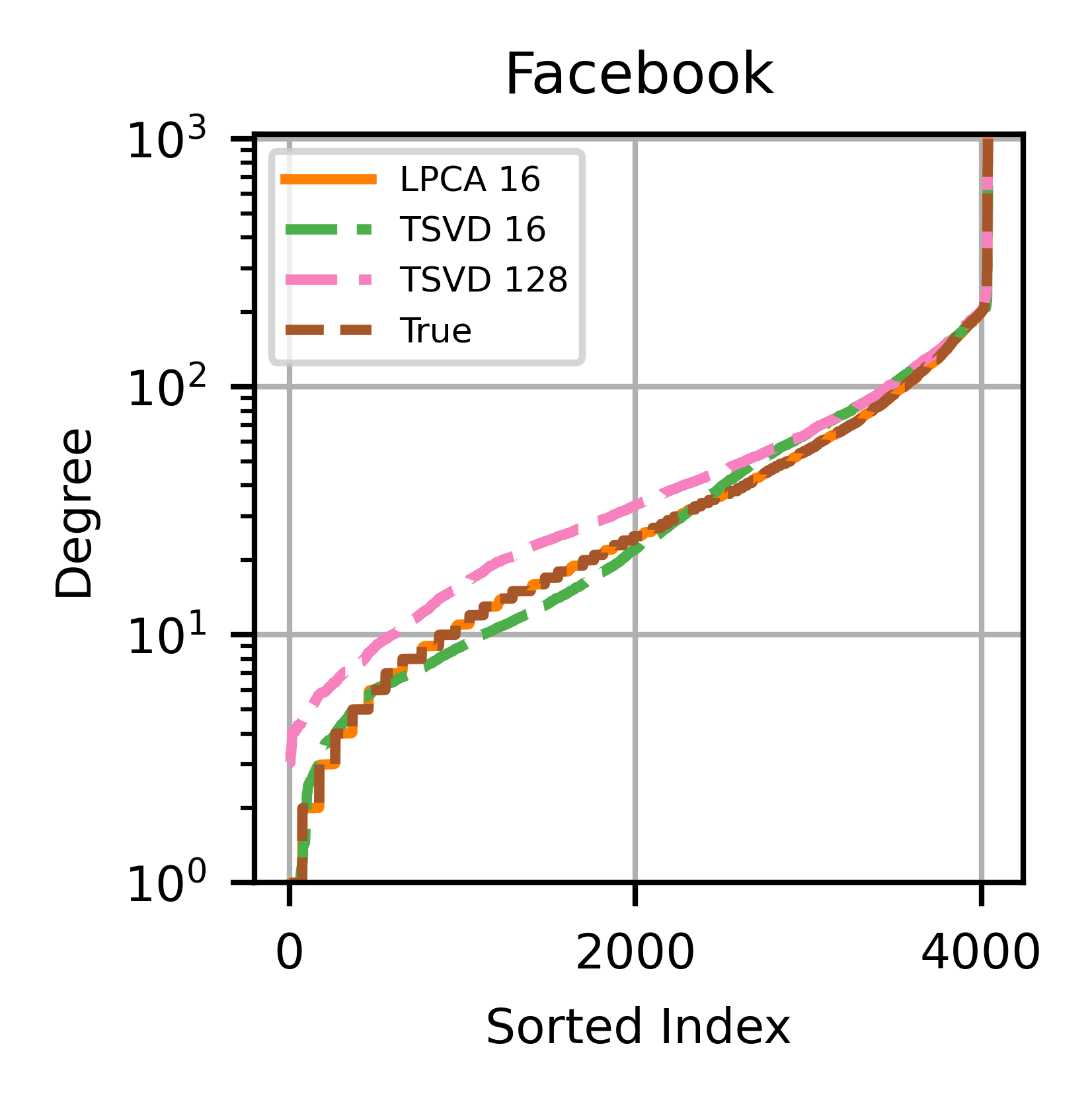} \hfill
\includegraphics[width=0.32\linewidth]{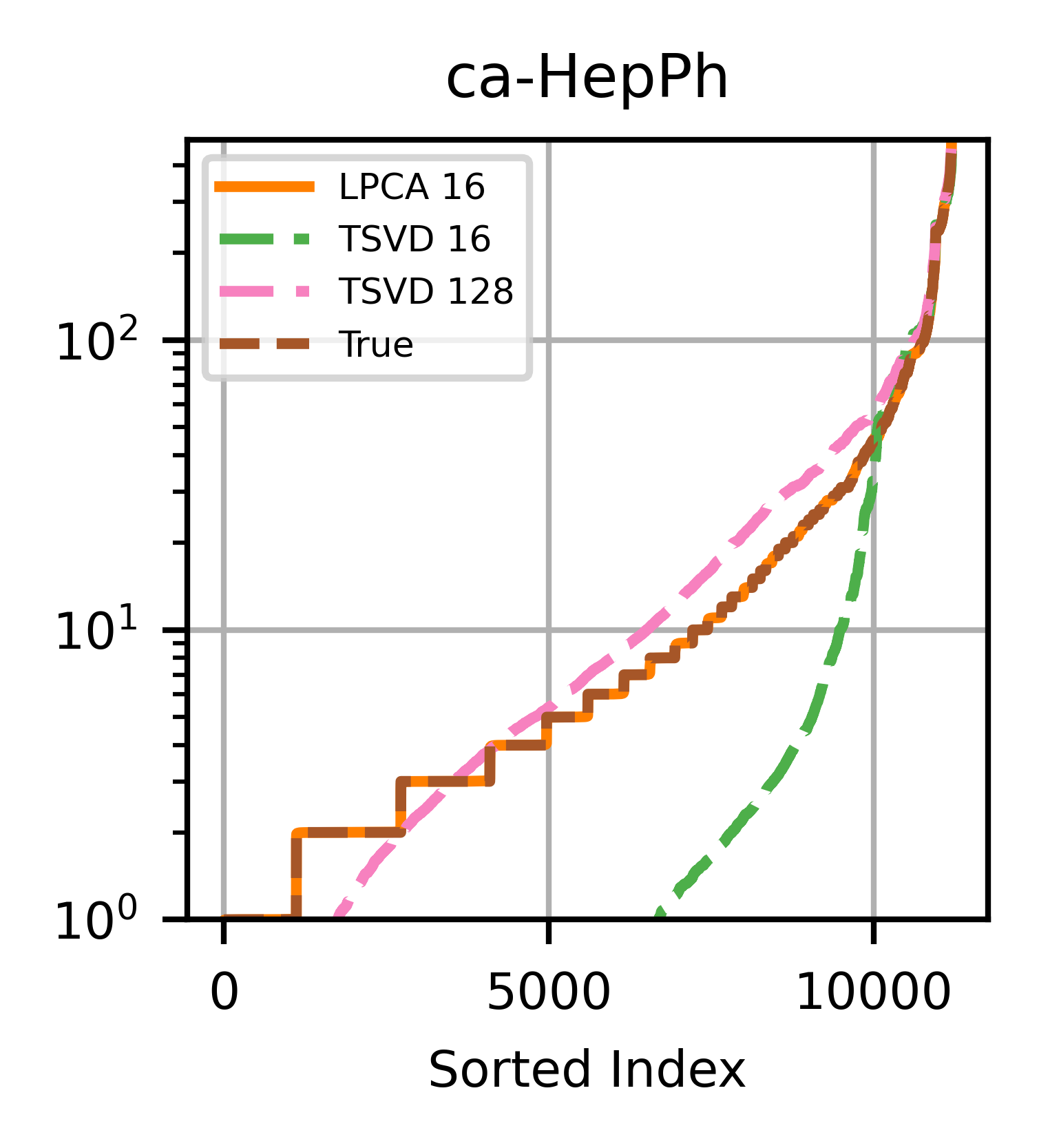}\hfill
\includegraphics[width=0.325\linewidth]{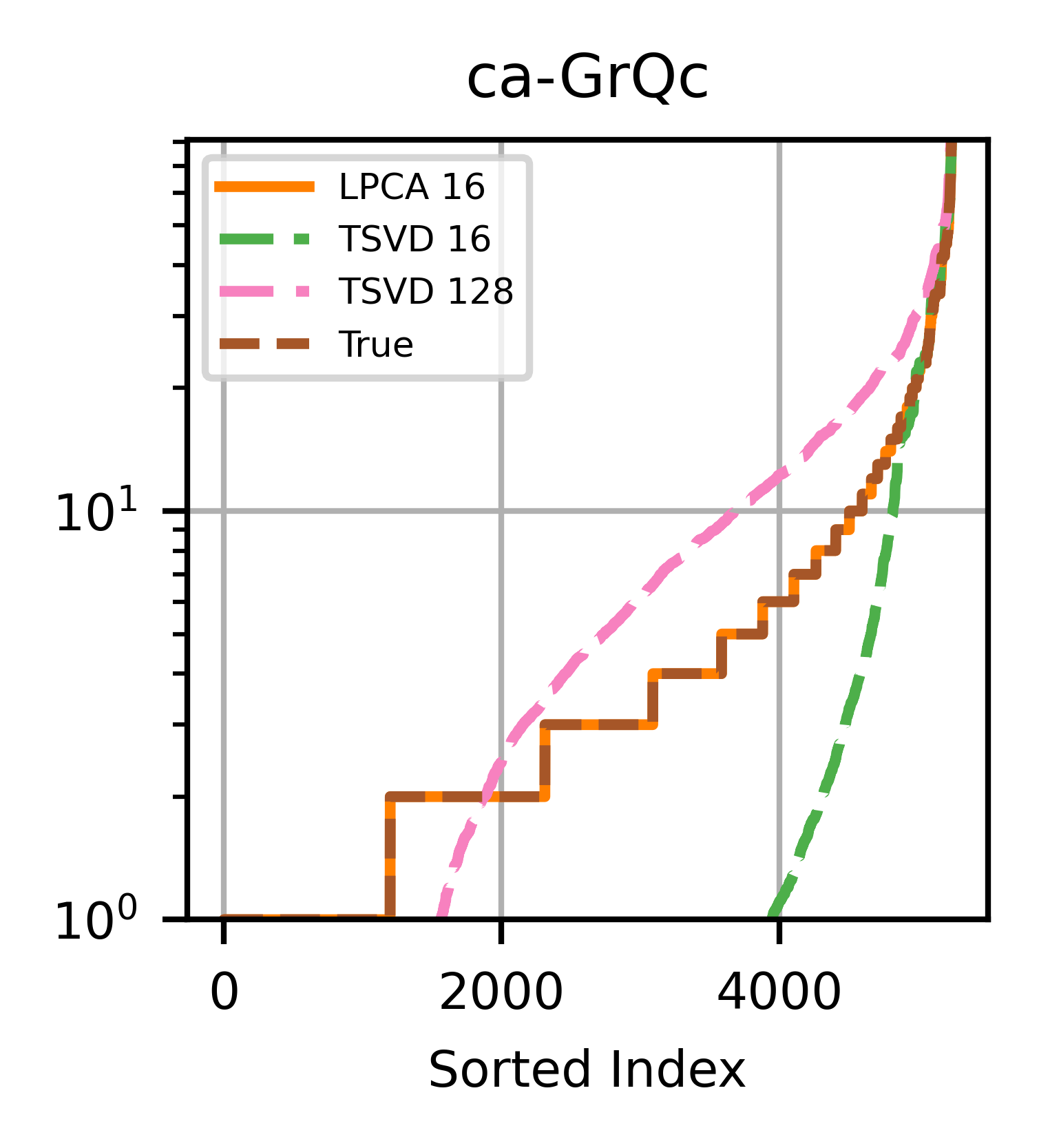}\
\caption{Sorted expected degrees of reconstructed networks.}
\label{fig:degseq}
\end{figure}

\begin{figure}[H]
\centering
\includegraphics[width=0.345\linewidth]{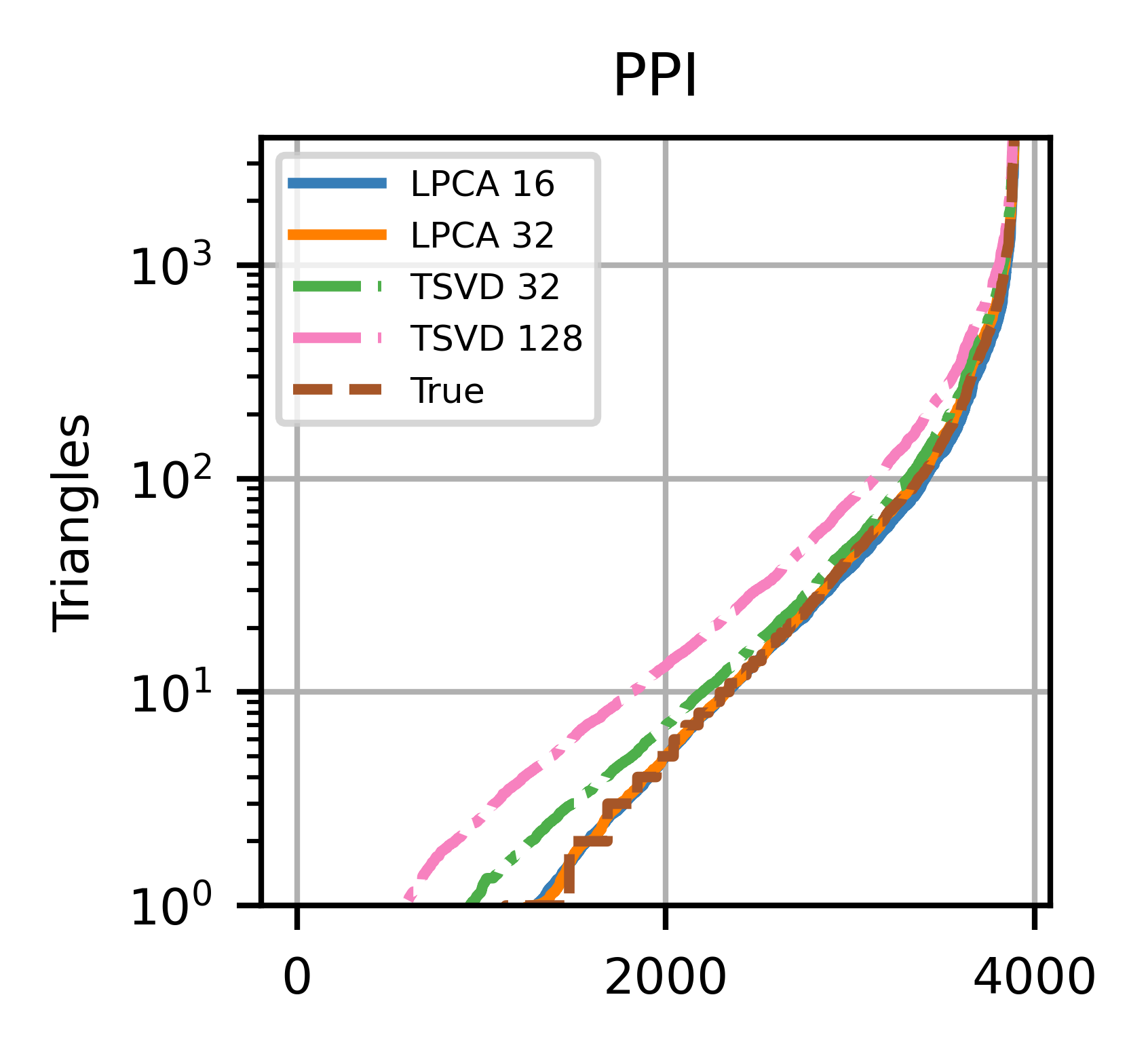} \hfill
\includegraphics[width=0.31\linewidth]{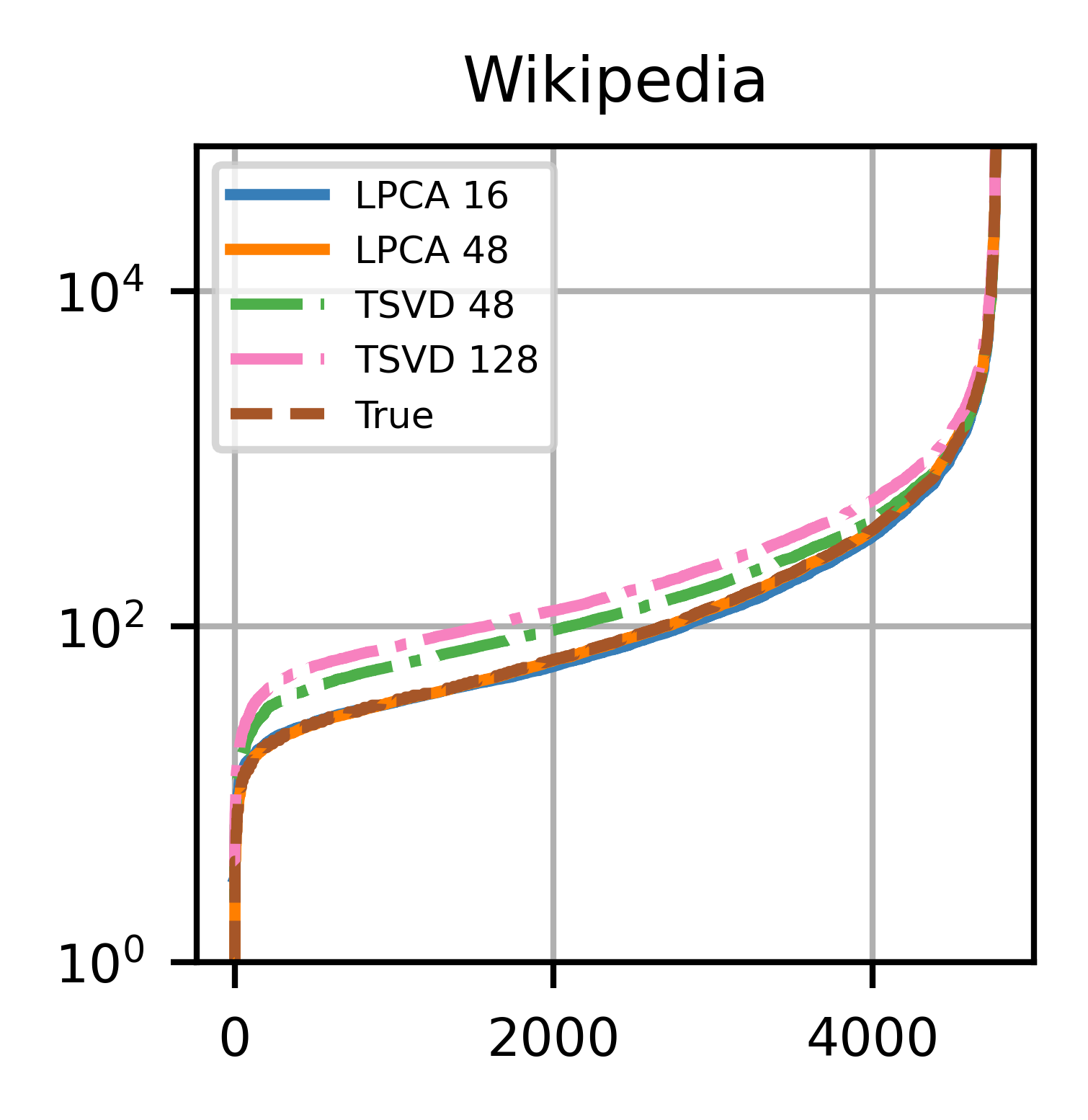} \hfill
\includegraphics[width=0.32\linewidth]{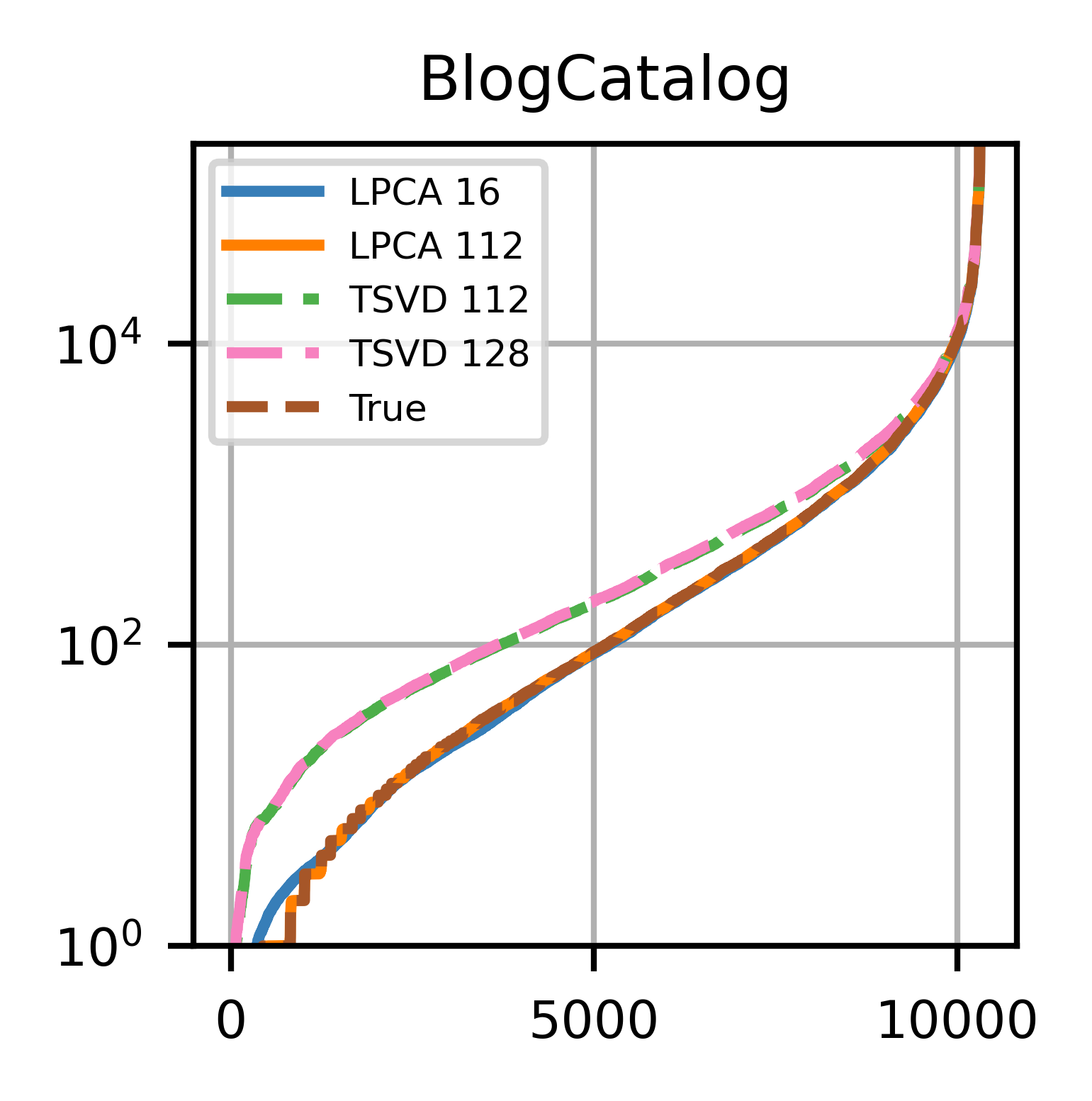}\\
\includegraphics[width=0.3475\linewidth]{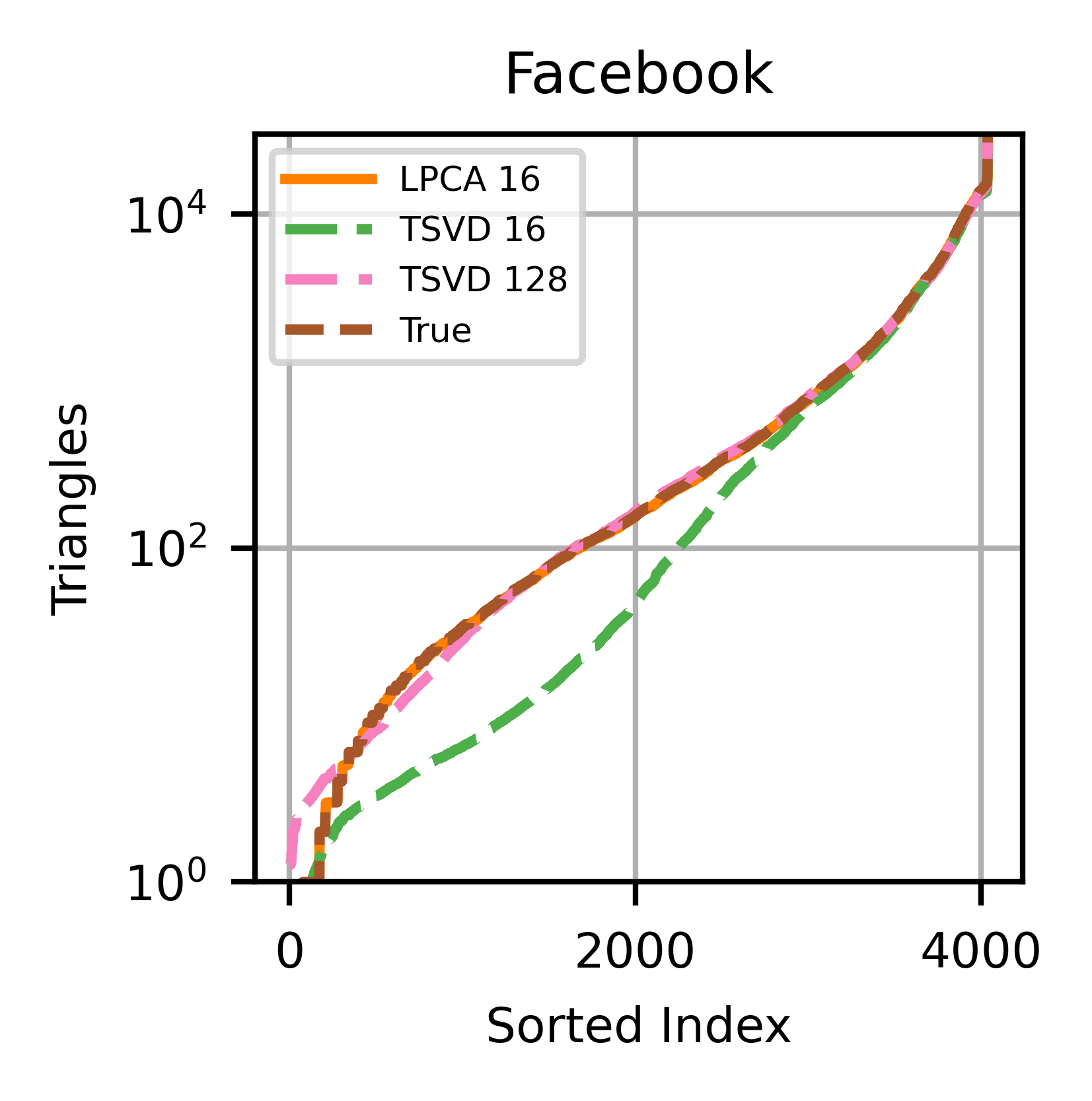} \hfill
\includegraphics[width=0.32\linewidth]{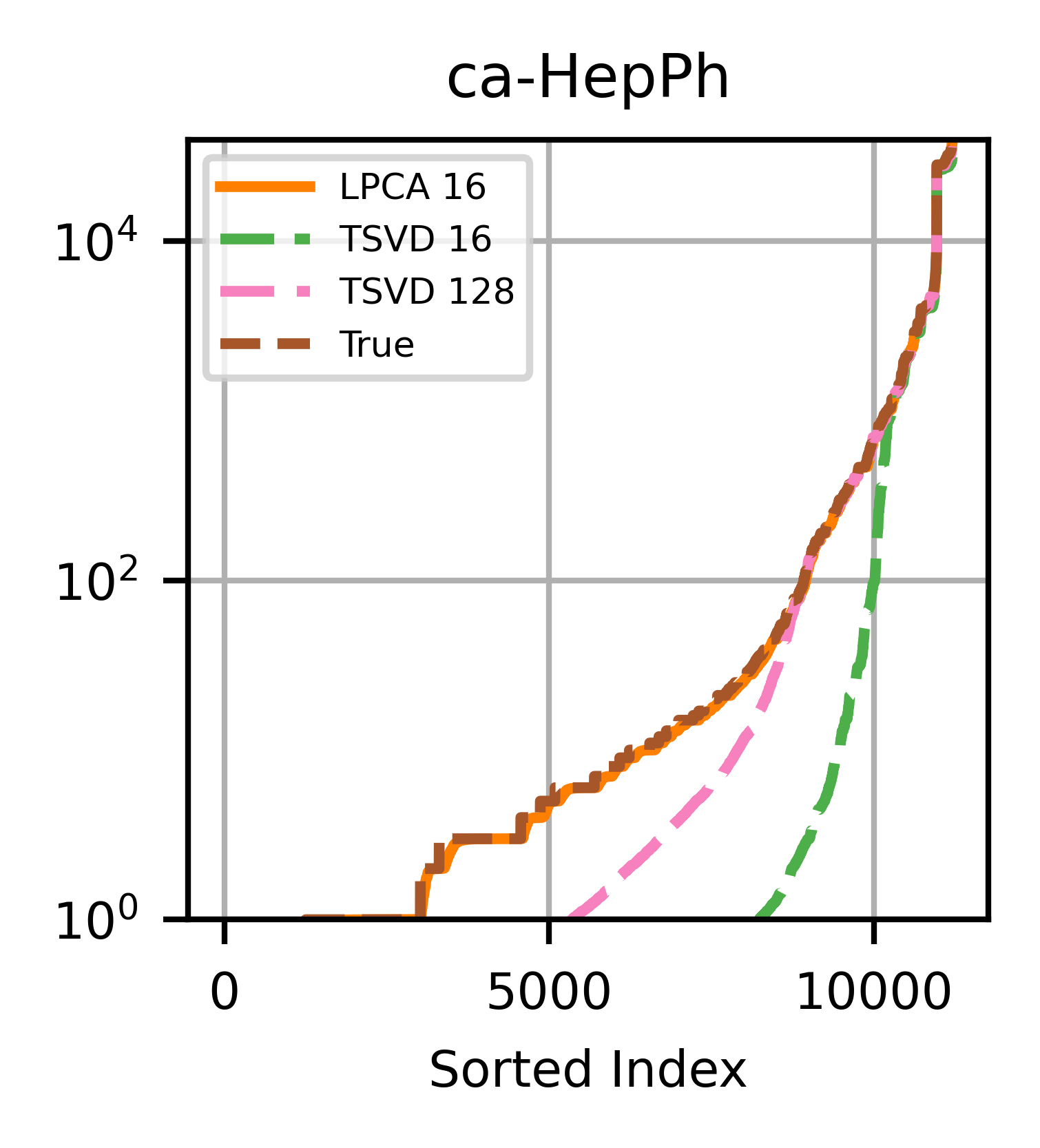}\hfill
\includegraphics[width=0.325\linewidth]{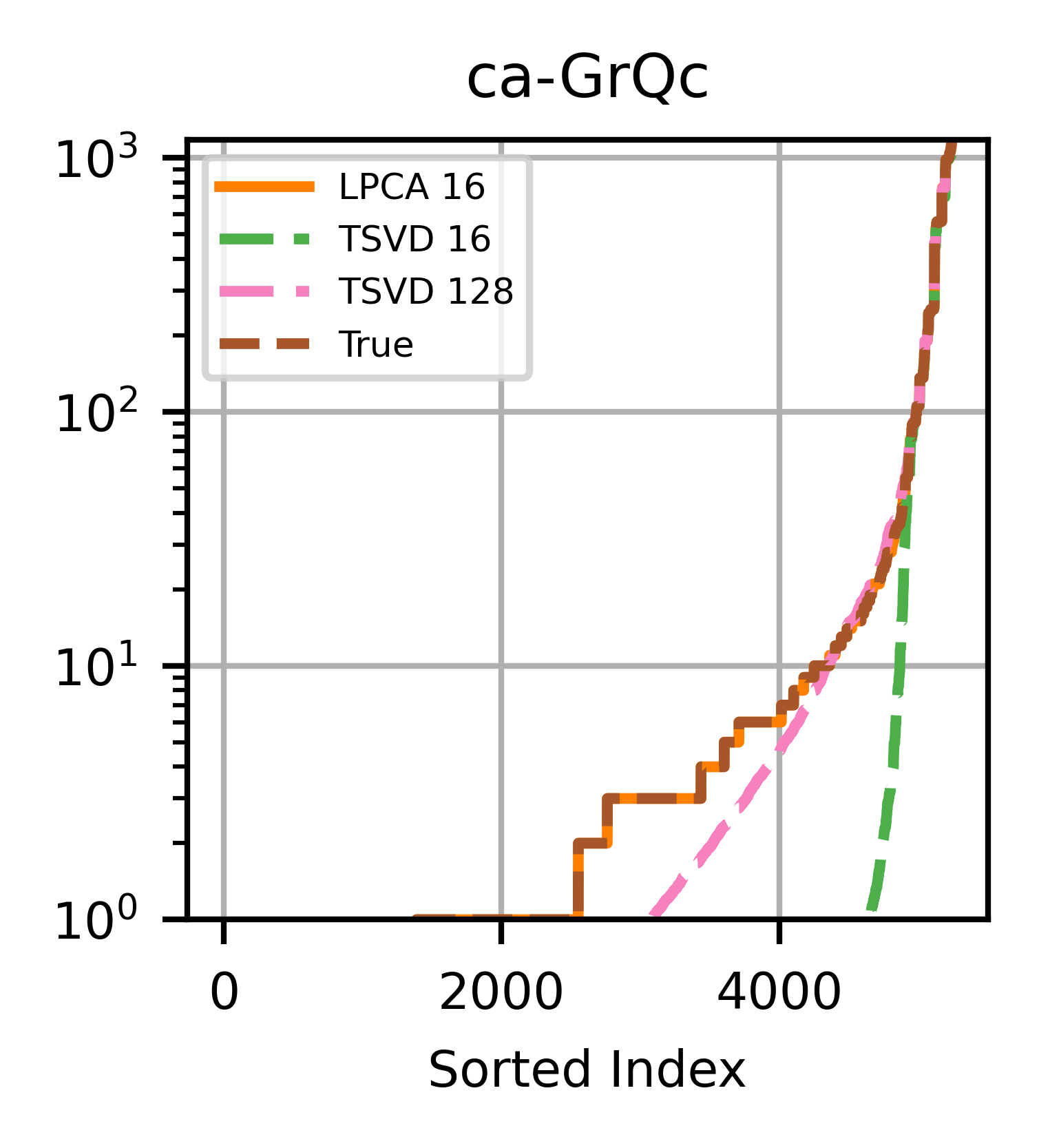}\
\caption{Sorted expected count of triangles involving each vertex in reconstructed networks.}
\label{fig:triseq}
\end{figure}

\paragraph{Recovery of Low-Degree Triangles.} We next turn to the challenge of reconstructing triangles on low-degree nodes, which was the focus of \cite{seshadhri2020impossibility}. We assess the recovery of low-degree triangles in real networks when the embedding rank is not sufficient for exact factorization. Our results are shown in Figure~\ref{fig:trivsdeg} for six networks. The results are representative of what we observe across all our experiments. In each figure, we plot using different factorization ranks the reconstructed normalized number of triangles ($y$-axis) among all nodes whose degree does not exceed a specific upper bound ($x$-axis). 
We normalize the counts by the number of nodes $n$ to have a consistent measure across all six networks.  Notice that the minimum possible non-zero value is  $\frac{1}{n}$ and corresponds to exactly one triangle.   We plot the performance of LPCA using rank 16, and for TSVD using rank 128. We also plot the performance of both methods for rank equal to the EFD minus 16. 
 Note that, for \texttt{ca-GrQc}, which is reconstructed exactly at our minimum rank of $16$, we simply plot rank $16$ itself.  In addition to the reconstruction results, we also plot the true normalized triangle counts.


In agreement with \cite{seshadhri2020impossibility}, we find that the TSVD method consistently underestimates low-degree triangles: whereas the true network begins producing triangles with fairly low-degree vertices, TSVD requires much higher-degree vertices to recover a single expected triangle. This holds at both ranks across the surveyed networks. By contrast, across all of these networks, the just-below-exact rank LPCA tightly matches the true triangle-degree curve. With the exception of \texttt{BlogCatalog}, even rank 16 LPCA  closely matches the true curve, especially at low degrees. Interestingly, in \texttt{BlogCatalog}, while rank 16 LPCA has a higher reconstruction Frobenius error ($.82$) than either rank 112 TSVD ($.73$) or rank 128 TSVD ($.71$), the former still achieves a single expected triangle with lower-degree vertices and overall matches the true triangle-degree curve more closely than the TSVD methods. This seems to suggest an implicit bias of the LPCA method towards capturing local structure in real-world graphs even when factorization is inexact. Understanding this bias more precisely would be an interesting direction for future work.
Overall we confirm that  LPCA not only outperforms TSVD, but more importantly, illustrates that embeddings can capture the triangle-rich structure of real networks with remarkably accuracy, even at very low ranks where exact factorization is impossible. 

\begin{figure}[h]
\centering
\includegraphics[width=0.33\linewidth]{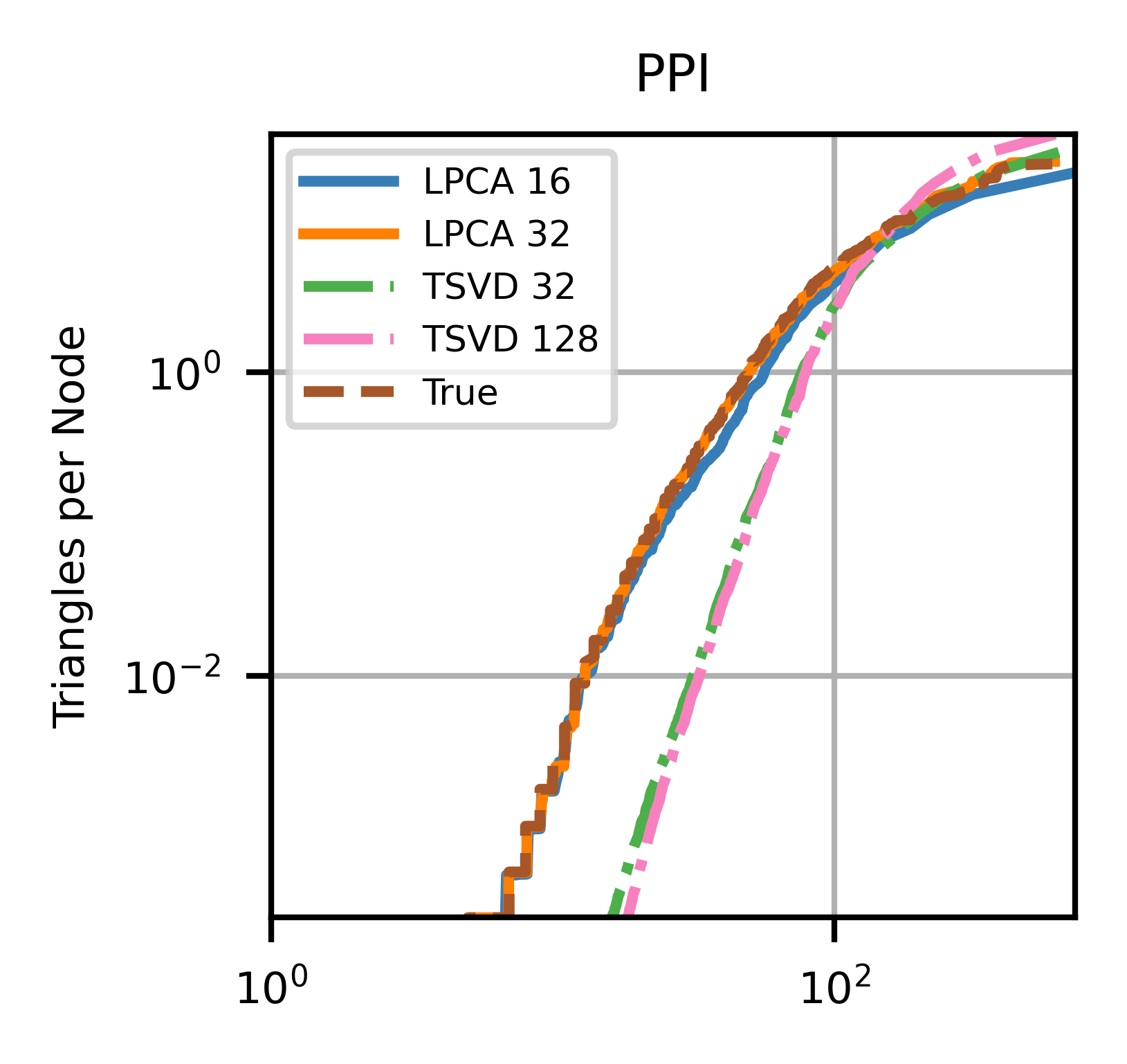} \hfill
\includegraphics[width=0.31\linewidth]{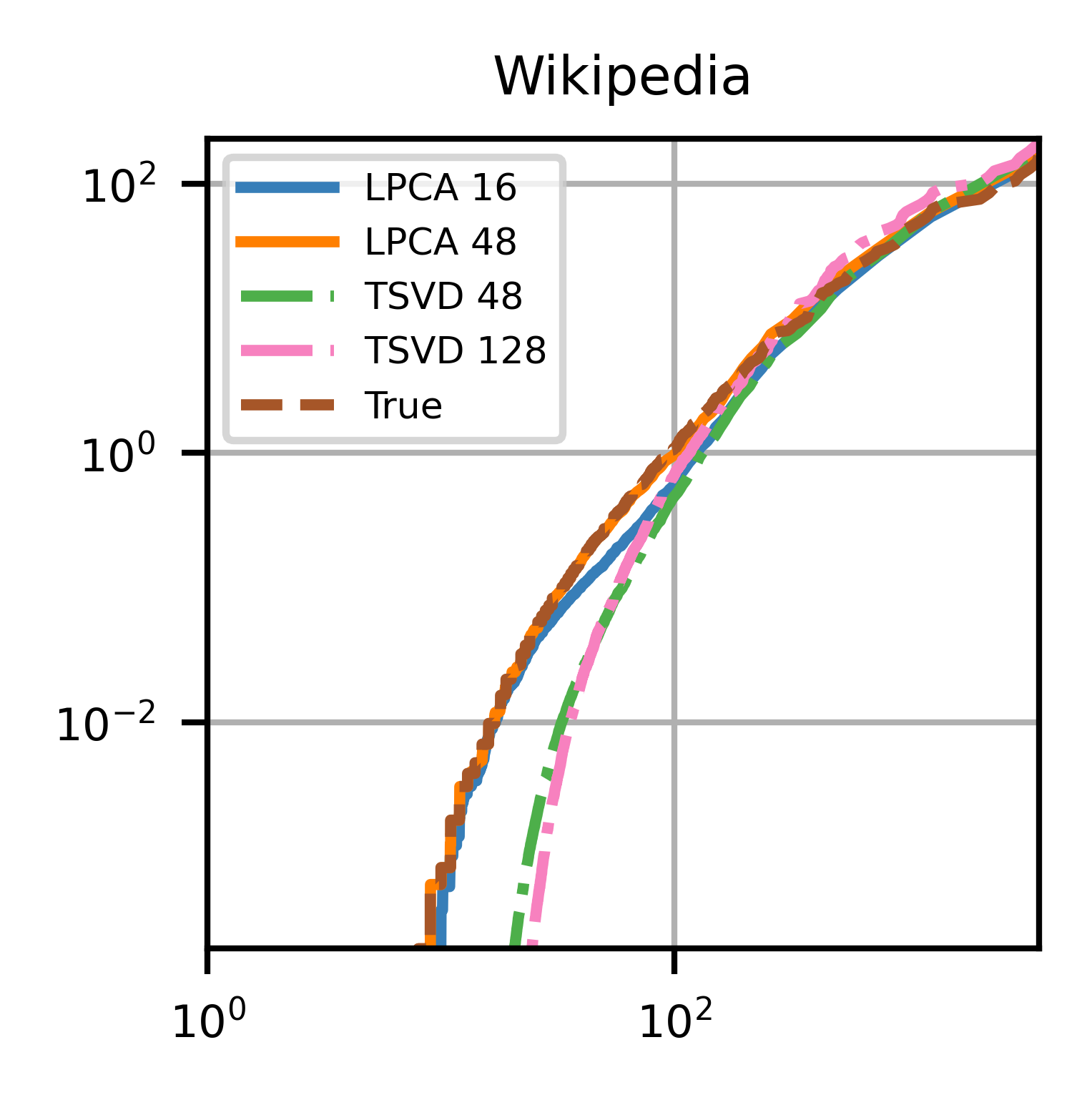} \hfill
\includegraphics[width=0.315\linewidth]{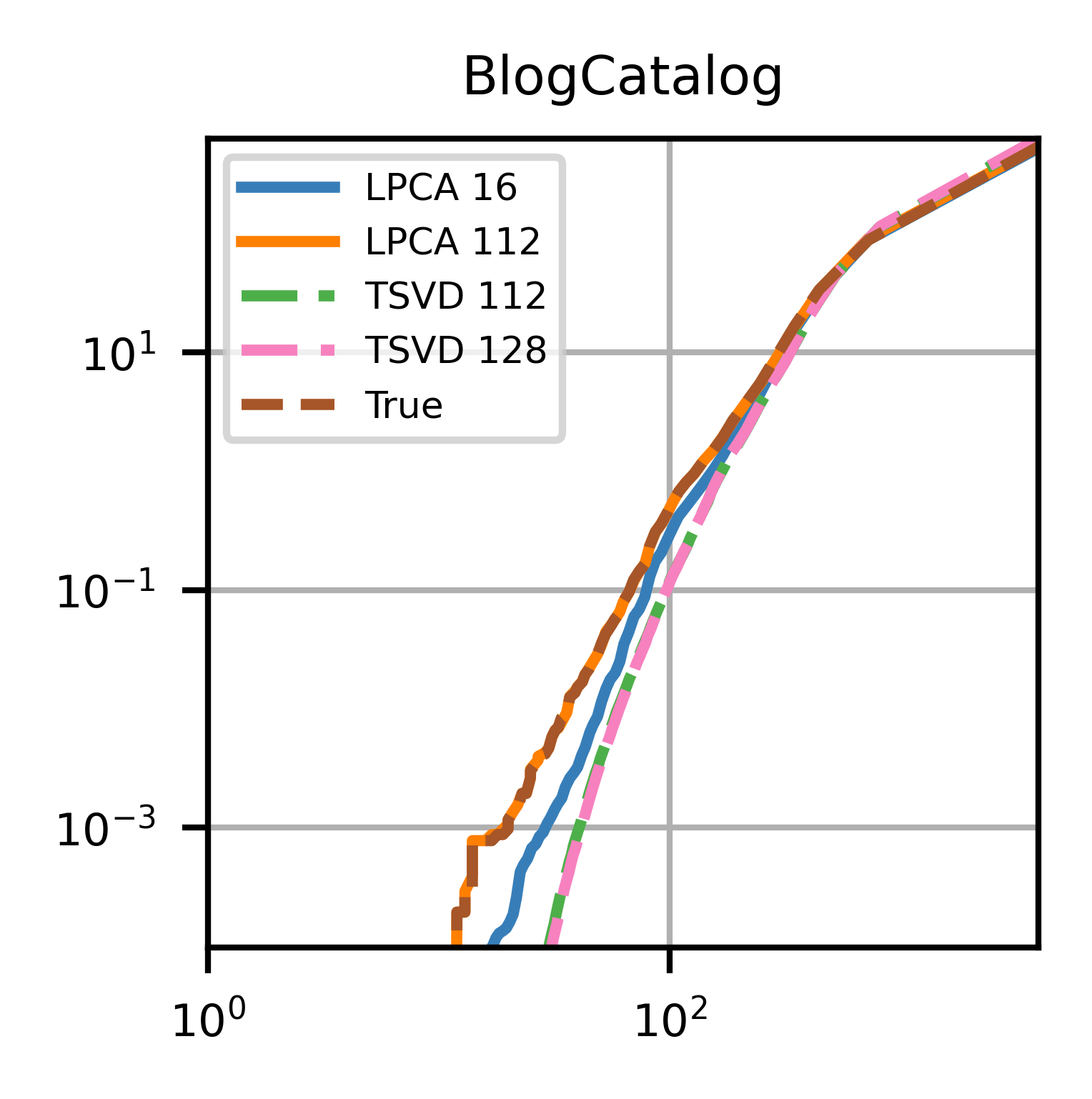}\\
\includegraphics[width=0.33\linewidth]{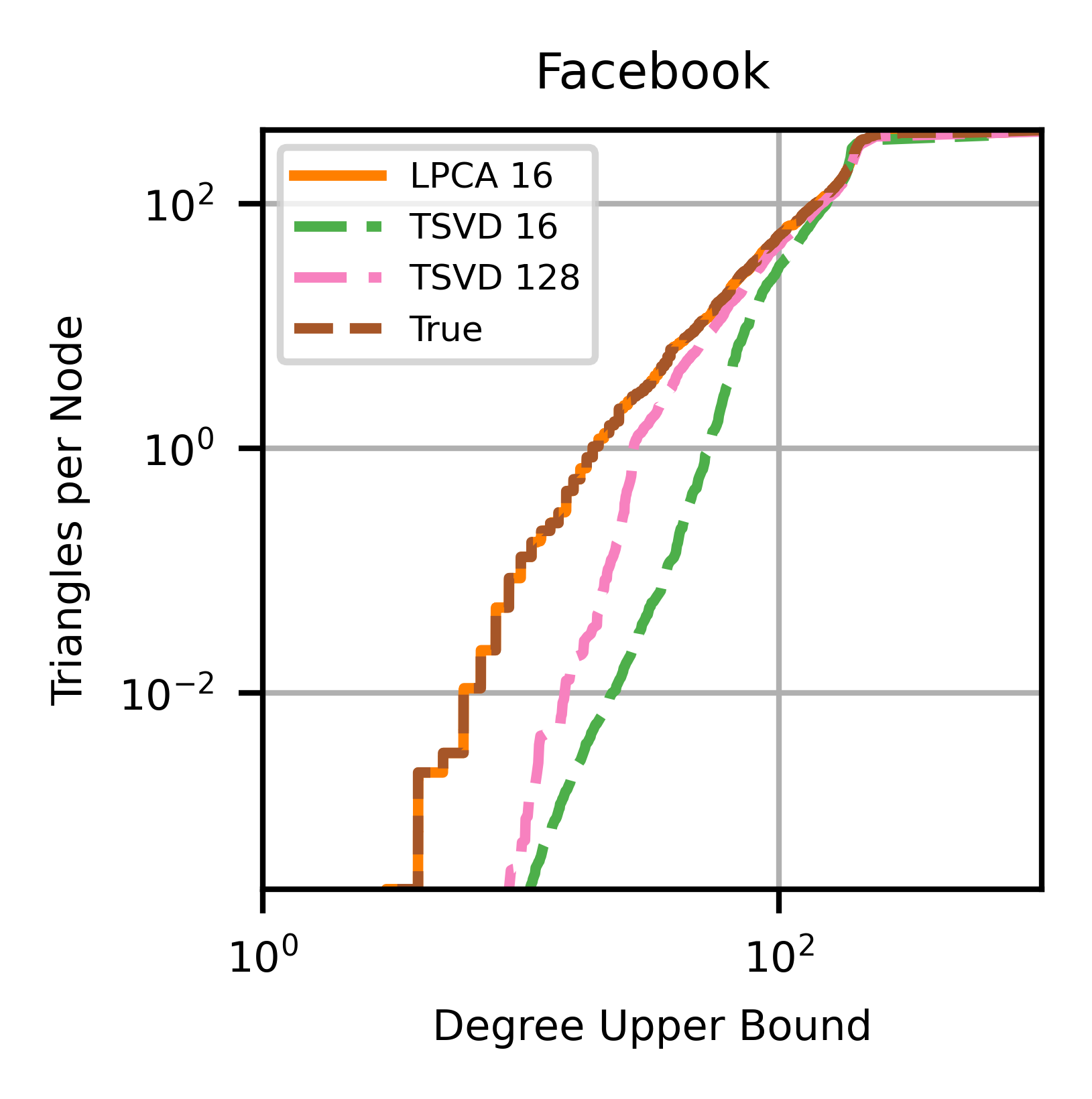} \hfill
\includegraphics[width=0.31\linewidth]{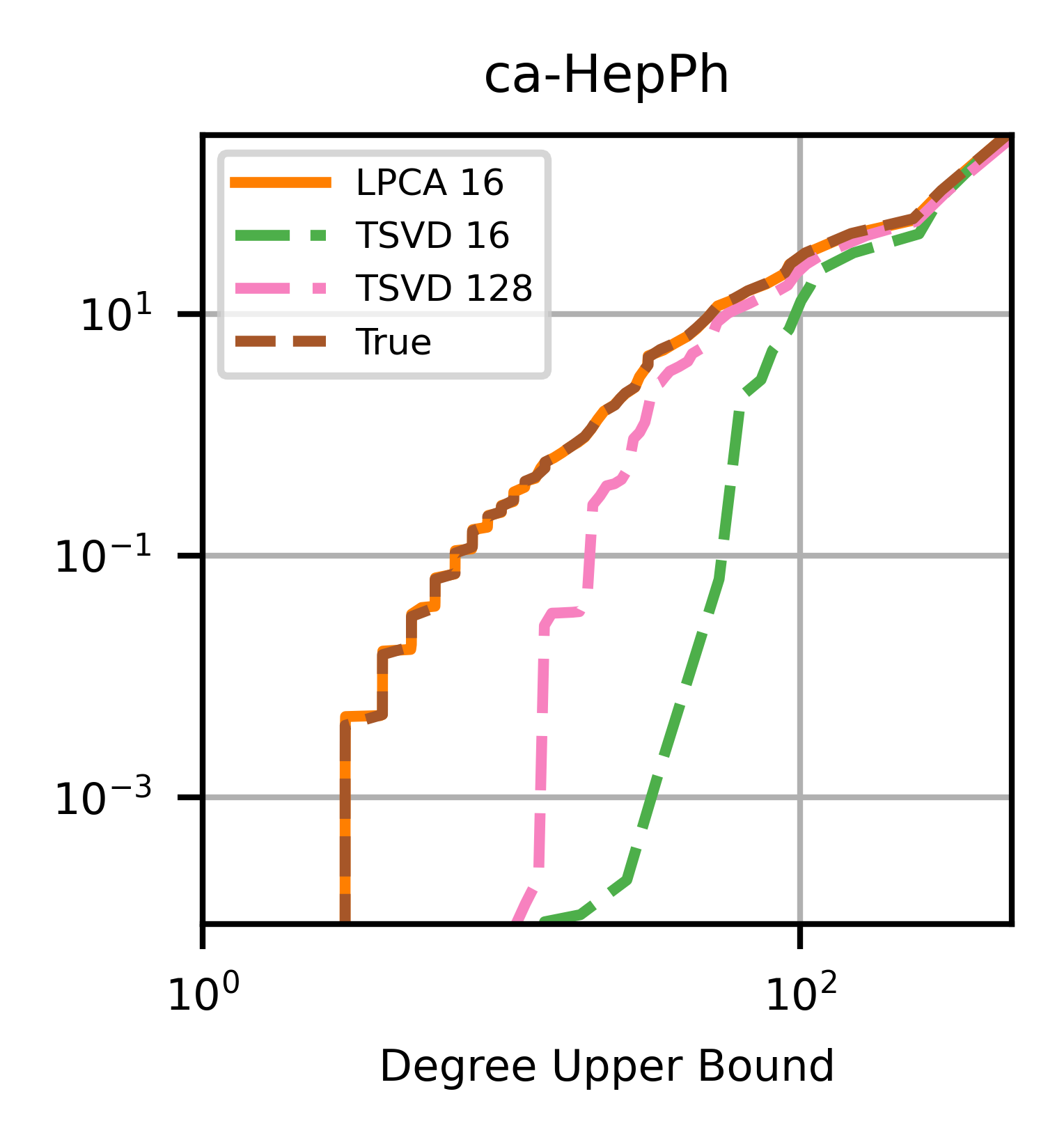}\hfill
\includegraphics[width=0.315\linewidth]{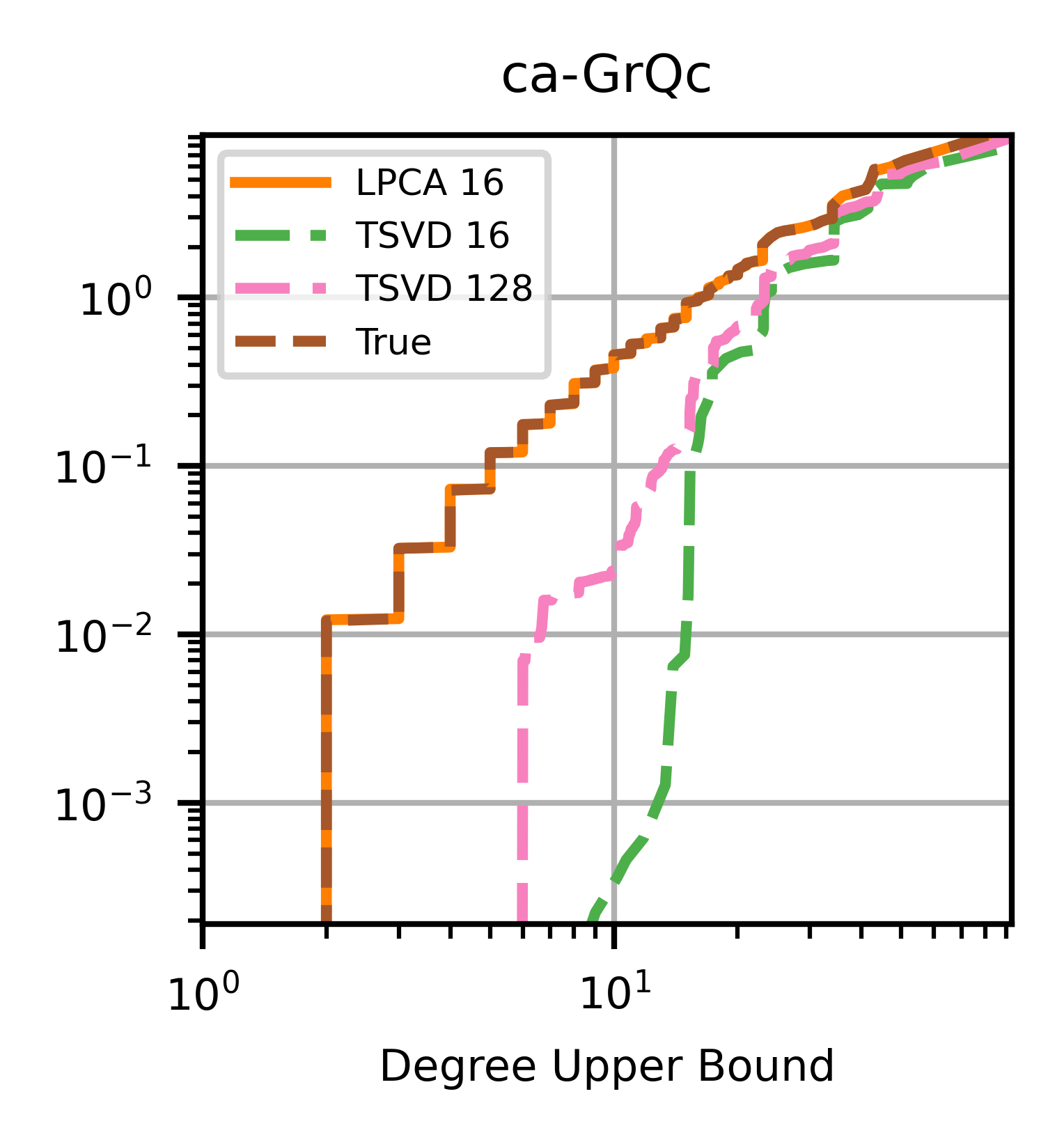}\\
\caption{True and recovered counts of triangles in subgraphs induced by nodes whose degree is upper bounded by $c$ ($y$-axis) vs. the degree upper bound $c$ ($x$-axis) for six networks. The recovered  triangles  have been counted in reconstructed networks using TSVD and LPCA for different ranks. 
}
\label{fig:trivsdeg}
\end{figure}

\vspace{-1.5em}
\section{Open Problems}
\label{sec:concl}

Our work leaves several open questions.  A key question is if  we can strengthen our theoretical results, and better explain the empirical performance of our algorithm on real-world graphs.  Answering this would help us understand the type of structure that our embeddings, and perhaps modern node embeddings more broadly, leverage to compress complex networks. From a practical perspective, understanding the connection between the ability of an embedding to reconstruct a graph and performance in downstream classification tasks is an important related question, key to work on graph auto-encoders and the privacy of node embeddings. In initial experiments, we find that our LPCA embeddings do not give good performance in downstream classification tasks. Are there embeddings that simultaneously yield exact or near exact factorizations and good performance in downstream applications? 
Generally, understanding the strengths and limitations of modern node embedding methods is a broad interesting direction. Our work tackles this question using the perspective of graph factorization, which is just one line of a broader  investigation.


\bibliographystyle{alpha}
\bibliography{neurips_2020}

\clearpage
\appendix 


\section{Deferred Proofs}\label{app:missing}

For completeness, we give a full proof of Theorem \ref{thm:sparse}, which shows that any bounded degree graph admits an exact low-rank factorization. Our proof closely follows the approach of \cite{alon1985geometrical} for bounding the sign rank of sparse matrices

\begin{reptheorem}{thm:sparse} Let $A \in \{0,1\}^{n \times n}$ be the adjacency matrix of a graph $G$ with maximum degree $c$. Then there exist embeddings $X,Y \in \R^{n \times (2c+1)}$ such that $A = \sigma (XY^T)$ where $\sigma(x) = \max(0,\min(1,x))$ is applied entry-wise to $XY^T$. 
\end{reptheorem}
\begin{proof}
Let $V \in \R^{n \times 2c+1}$ be the Vandermonde matrix with $V_{t,j} = t^{j-1}$. For any $x \in \R^{2c+1}$, $[Vx](t) = \sum_{j = 1}^{2c+1} x({j}) \cdot t^{j-1}$. That is: $Vx \in \R^{n}$ is a degree $2c$  polynomial evaluated at the integers $t = 1,\ldots,n$.

Let  $a_i$ be the $i^{th}$ row of $A$. $a_i$ has at most $c$ nonzeros since $G$ has maximum degree $c$. We seek to  find $x_i$ so that $s(V x_i) = a_i$, and thus, letting $X \in \R^{n \times 2c+1}$ have $x_i$ as its $i^{th}$ row, will have $A = s(VX^T)$. This yields the theorem since, if we scale $VX^T$ by a large enough constant (which does not change its rank), all its positive entries will be larger than $1$ and thus we will have $\sigma(VX^T) = A$.

To give $x_i$ with $s(V x_i) = a_i$, we equivalently must find a degree $2c$ polynomial which is positive at all integers $t$ with $a_i(t) = 1$ and negative at all $t$ with $a_i(t) = 0$. Let $t_1,t_2,\ldots,t_c$ denote the indices where $a_i$ is $1$. Let $r_{i,L}$ and $r_{i,U}$ be any values with $t_{i-1} < r_{i,L} < t_i$ and $t_{i} < r_{i,U} < t_{i+1}$. If we chose the polynomial with roots at each $r_{i,L}$ and $r_{i,U}$, it will have $2c$ roots and so degree $2c$. Further, this polynomial will switch signs just at each  root $r_{i,L}$ and $r_{i,U}$. We can observe then that the polynomial will have the same sign at  $t_1,t_2,\ldots,t_c$ (either positive or negative). Flipping the sign to be positive, we have the result.
\end{proof}

We next give an extension of Theorem \ref{thm:pnasUs}, showing that a simple binary embedding can yield a graph with very high triangle density.
\begin{theorem}[Simplified Embeddings Capturing Triangles]\label{thm:pnasUs2} Let $\bar A = \sigma(U M U^T)$ where $\sigma = \max(0,\min(1,x))$. For any $c$, there are matrices $U \in \{0,1\}^{n \times k}$ and $M \in \R^{k \times k}$ for $k = O(\log n)$ such that 
if a graph $G$ is generated by adding edge $(i,j)$ independently with probability $A_{i,j}$: 1) $G$ has maximum degree $c$ and 2) $G$ contains $\Omega(c^2 n)$ triangles.
\end{theorem}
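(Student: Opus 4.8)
The plan is to binarize the union-of-cliques construction from Theorem~\ref{thm:pnasUs}: I will arrange for $\bar A$ to be the adjacency matrix of a disjoint union of $m = \lceil n/(c+1)\rceil$ cliques, each of size $c+1$, so that every vertex has degree $c$ and the graph contains $m\binom{c+1}{3} = \Omega(c^2 n)$ triangles. The whole task then reduces to choosing binary row vectors $u_i$ (the rows of $U$) and a matrix $M$ so that $u_i^\top M u_j > 1$ exactly when $i,j$ lie in the same clique and $u_i^\top M u_j \le 0$ otherwise; applying $\sigma$ to $UMU^\top$ yields precisely the block-diagonal all-ones matrix.

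First I would assign to each clique $s \in \{1,\dots,m\}$ a distinct binary codeword $b(s) \in \{0,1\}^\ell$ with $\ell = \lceil \log_2 m\rceil = O(\log n)$, which is possible since $2^\ell \ge m$. To make ordinary inner products detect codeword equality, I would use the standard ``bit-and-complement'' doubling: give every node in clique $s$ the vector $w = (b_1,\,1-b_1,\,\dots,\,b_\ell,\,1-b_\ell) \in \{0,1\}^{2\ell}$. At each coordinate pair the contribution to $\langle w_i,w_j\rangle$ is $b_k b_k' + (1-b_k)(1-b_k')$, which equals $1$ when the two bits agree and $0$ otherwise, so $\langle w_i, w_j\rangle$ counts the number of agreeing bits. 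Hence $\langle w_i, w_j\rangle = \ell$ when $i,j$ share a clique and $\langle w_i, w_j\rangle \le \ell-1$ whenever their codewords differ.

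Next I would realize an affine threshold on this agreement count by prepending a constant coordinate: set $u_i = (1,\, w_i) \in \{0,1\}^{1+2\ell}$, so that $k = 1 + 2\ell = O(\log n)$ as required. Taking the block-diagonal matrix $M = \operatorname{diag}\!\big(-K(\ell - \tfrac12),\, K\,I_{2\ell}\big)$ for any constant $K > 2$ gives $u_i^\top M u_j = K\langle w_i, w_j\rangle - K(\ell - \tfrac12)$, which evaluates to $K/2 > 1$ when $i,j$ are in the same clique and to at most $-K/2 < 0$ otherwise. This is exactly the separation $\sigma(\cdot)$ needs in order to output the intended block-diagonal $\bar A$ (the diagonal entries become harmless self-loops, which do not affect degrees or triangle counts). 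The claimed degree bound and triangle count then follow immediately from the clique structure.

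The main obstacle is the middle step: engineering binary embeddings whose bilinear form cleanly separates ``same clique'' from ``different clique'' under a hard threshold while the entries of $U$ are confined to $\{0,1\}$. The bit-and-complement encoding is what forces a plain inner product to compute Hamming agreement, and the extra constant coordinate together with the diagonal $M$ is what converts that agreement count into a sign that $\sigma$ can cleanly threshold. Once these two devices are in place, the remaining bookkeeping—the codeword count, the per-vertex degree, and the $\Omega(c^2 n)$ triangle density—is routine and parallels the proof of Theorem~\ref{thm:pnasUs}.
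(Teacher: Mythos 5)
Your proof is correct, and it reaches the same target graph (a disjoint union of cliques realized as a hard threshold of $UMU^T$) by a genuinely different construction than the paper's. The paper's proof is probabilistic: every row of $U$ is a binary vector with exactly $2\log n$ ones, the $n/c$ cluster centers are chosen at random, a union bound over pairs of centers shows that with high probability any two overlap in at most $\log n/3$ coordinates, cluster members are obtained by perturbing their center on few bits, and the threshold is implemented by $M = I - \frac{1}{4\log n}J$, which works only because the fixed row weight gives $\log n \cdot J = \frac{1}{4\log n}\,UJU^T$, so that an edge appears exactly when $u_i^T u_j > \log n$. You replace all of this with a deterministic coding argument: distinct codewords per clique, bit-and-complement doubling so that the plain inner product computes Hamming agreement, and a constant coordinate together with a diagonal $M$ that converts the agreement count into a cleanly signed quantity. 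Your route buys determinism (no failure probability, no union bound, validity for every $n$ rather than $n$ large enough for the concentration estimates), an explicit constant margin of $K/2$ on both sides of the threshold, and an essentially one-line verification that $\sigma(UMU^T)$ is the intended block-diagonal matrix. The paper's route buys embeddings closer in spirit to the random models it is responding to -- uniform-weight random binary rows with $M$ a small perturbation of the identity -- at the cost of more delicate bookkeeping (as written, it checks center-to-center and member-to-center overlaps but not member-to-member overlaps across distinct clusters, where the margin is in fact tight). Both arguments share the same benign loose ends, which you correctly flag as harmless: divisibility of $n$ by the clique size, self-loops on the diagonal, and triviality of the triangle bound for constant $c \le 2$.
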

\vspace{-1em}
\begin{proof}
Let $k = d \log n$ for a sufficiently large constant $d$ and 
consider binary $U \in \{0,1\}^{n \times k}$ where each row has exactly $2\log n$ nonzero entries. 
Let $D = UU^T - \log n \cdot J$ where $J$ is the all ones matrix. Note that $D$ can be written as $U M U^T$ for $M = I - \frac{1}{4\log n} J$.

Observe that the only positive entries in $D$ are those where $u_i^T u_j > \log n$. Thus $\bar A = \sigma(D)$ is binary with $1$s where  $u_i^T u_j > \log n $ and $0$s elsewhere. In turn, $G$ is deterministic, with adjacency matrix $\bar A$.

We will construct $U$ so that its rows are partitioned into $n/c$ clusters with $c$ nodes in them each as in Theorem \ref{thm:pnasUs}.
The construction is as follows: choose $n/c$ random binary vectors $m_1,\ldots, m_{n/c}$ (the `cluster centers') with exactly $2\log n$ nonzeros in them. In expectation, the number of overlapping entries between any two of these vectors will be $\frac{2\log n}{d}$ and so with high probability after union bounding over ${n/c \choose 2} < n^2$ pairs, all will have at most $\frac{\log n}{3}$ overlapping entries if we set $d$ large enough. Thus, $m_i^T m_j < \frac{\log n}{3}$ for any $i$ and $j$ and the centers will not be connected in $G$.

If we set $d$ large enough, then around each cluster center $m_i$, there are at least ${d \log n - 2\log n \choose \log n/3} \ge n \ge c$ binary vectors $v_1,\ldots,v_{c}$  each with $2 \log n$ nonzeros that overlap the center on all but $\frac{\log n}{3}$ bits and so have $m_i^T v_j > 2\log n -\frac{ \log n}{3} > \log n$ and a connection in the graph.

Additionally, each $v_i$ must overlap each other $v_j$ in the same cluster on all but at most $\frac{2\log n}{3}$ bits and so $v_i^T v_j \ge 2 \log n - \frac{2\log n}{3} > \log n$ and so they will be connected in the graph. Finally, each $v_i$ overlaps each center of a different cluster on at most $\frac{2\log n}{3} < \log n$ bits, and so there are no connections between clusters.
So $G$ is a union of $n/3$ sized 3 cliques, and so by the same argument as Theorem \ref{thm:pnasUs} has maximum degree $c-1$ and $\Omega(c^2n )$ triangles, giving the theorem.
\end{proof}

\end{document}